\documentclass{article}

\PassOptionsToPackage{numbers}{natbib}
\usepackage[preprint]{neurips_2026}

\usepackage[utf8]{inputenc} 
\usepackage[T1]{fontenc}    
\usepackage{hyperref}       
\usepackage{url}            
\usepackage{booktabs}       
\usepackage{amsfonts}       
\usepackage{nicefrac}       
\usepackage{microtype}      
\usepackage{xcolor}         
\usepackage{graphicx}
\usepackage{bm}
\usepackage{algorithm}
\usepackage{algpseudocode}
\usepackage{float}
\usepackage{enumitem}
\usepackage{amsmath,amssymb,amsfonts}
\usepackage{amsthm}
\usepackage{mathtools}
\usepackage{placeins}
\usepackage{cancel}
\usepackage{subcaption}
\usepackage{booktabs}
\usepackage{makecell}
\usepackage{hyperref}
\usepackage{url}

\usepackage{natbib}

\hypersetup{
   colorlinks=true,
   citecolor=black
}

\newtheorem{theorem}{Theorem}

\newtheorem{lemma}{Lemma}
\newtheorem{corollary}{Corollary}

\newcommand{\Ans}{\mathrm{Ans}}
\newcommand{\Pool}{\mathcal{P}}
\newcommand{\Cand}{\mathcal{A}}
\newcommand{\New}{\mathcal{B}}
\newcommand{\Subpool}{\mathcal{D}}
\newcommand{\Xspace}{\mathcal{X}}
\newcommand{\length}{\mathrm{len}}
\newcommand{\pa}{\mathrm{pa}}
\newcommand{\ind}{\mathbf{1}}

\newcommand{\PRM}{\mathrm{PRM}}

\newcommand{\bbE}{\mathbb{E}}
\newcommand{\dd}{\,\mathrm{d}}

\usepackage{graphicx}
\usepackage[table]{xcolor}
\usepackage{hhline}

\usepackage{cleveref}
\usepackage{booktabs}
\usepackage{tabularx}
\usepackage{amsmath}
\usepackage{amssymb}
\usepackage{float}
\usepackage{etoc}

\DeclareMathOperator{\Top}{Top}
\newcommand{\TopPool}[3]{\Top\!\left(#1\,;\,#2\,;\,#3\right)}

\title{Beyond the Frontier: Stochastic Backtracking for Efficient Test-Time Scaling}

\author{%
Dao Tran\thanks{Contributed equally} , Duc Anh Le\footnotemark[1] , 
Ngoc Luu, Quan Pham, Tung Pham, Hung Bui  \\
\texttt{ \{tranao, lducanh, nluu, quanpham, tungp, hungbui\}}  \\
\texttt{@qti.qualcomm.com} \\[0.7em]
Qualcomm AI Research\thanks{Qualcomm AI Research is an initiative of Qualcomm Technologies, Inc.}\\[-0.1em]
}

\definecolor{deltaPass}{RGB}{230, 126, 34}  
\definecolor{deltaTok}{RGB}{52, 152, 219}   

\newcommand{\sd}[2]{#1$_{\pm#2}$}

\renewcommand{\arraystretch}{1.08}

\begin{document}

\maketitle
\begin{abstract}

Test-time scaling improves language-model reasoning by spending additional compute to explore multiple solution trajectories. The key challenge is to maximize accuracy while minimizing the total number of generated tokens during reasoning. Recent PRM-guided methods score intermediate prefixes to steer this search, but most are frontier-only: they keep only the current active prefixes and irreversibly prune or resample away the rest using noisy PRM scores. This can cause premature commitment, diversity collapse, and the loss of prefixes that still admit correct continuations. We introduce \emph{stochastic backtracking over a persistent pool} of historical prefixes, allowing test-time compute to revisit previously generated states instead of only expanding the current frontier. To make this efficient, we propose two complementary mechanisms. \emph{Subpool Selection} strengthens greedy PRM-guided search by applying Top-$N$ selection within random subpools, giving historical prefixes a chance to bypass over-scored frontier candidates. \emph{Power Backtrack Sequential Monte Carlo} extends SMC-style resampling to the persistent pool using powered PRM scores and mixture-corrected weights. Across mathematical reasoning benchmarks and model scales, our methods consistently achieve higher accuracy per token count, and the same level of accuracy using only a fraction of the token count in comparison to strong PRM-guided baselines, demonstrating that persistent-pool stochastic backtracking provides a simple and effective way to improve the accuracy–token trade-off in test-time scaling.

\end{abstract}



\vspace{-1mm}
\section{Introduction}
\vspace{-1mm}
Test-time scaling (TTS) \cite{s1, jaech2024openai, guo2025deepseek} improves LLM reasoning by spending additional inference-time compute to explore, verify, or refine reasoning trajectories. This paradigm is especially attractive for mathematical reasoning, where a fixed model may fail under one decoding path but succeed when allowed to explore alternatives \cite{liu2025can, brown2024large}. Recent works \cite{snell2025scaling, karan2026reasoning} have shown that such compute can substantially improve reasoning performance, but these gains depend on allocating additional tokens effectively rather than spending them on unproductive paths.

A common way to allocate test-time compute is to use process reward models (PRMs) \cite{zhang2025lessons, wang2024math, yuan2025free, luo2024improve} to score prefixes, i.e., partial answers. Unlike outcome reward models \cite{stiennon2020learning, ouyang2022training, hosseini2024vstar, zhang2025generative}, which score only complete solutions, PRMs provide step-level signals that allow beam search \cite{snell2025scaling}, tree search \cite{yao2023tree}, and sequential Monte Carlo methods \cite{zhao2024probabilistic, feng2025stepbystep, puri2025probabilistic, giannone2025mitigating} to prioritize promising partial trajectories before a full solution is generated. This makes PRM-guided search more adaptive than Best-of-\(N\) \cite{brown2024large} or self-consistency \cite{wang2023selfconsistency}, which primarily operate over independently generated complete samples. However, PRM scores are imperfect proxies for downstream correctness: they can be noisy, locally ambiguous, and mismatched to the continuation behavior of the deployed generator \cite{zhang2025lessons, park2026know, zheng2025cold}.

This imperfection is especially damaging in frontier-only search. Many PRM-guided test-time scaling methods form the next search state only from newly generated children, discarding other previously generated prefixes. This frontier-only memory structure turns noisy intermediate scores into irreversible decisions. An under-scored prefix can be removed from future expansion even if it still admits a correct continuation, while an over-scored incorrect prefix can continue to receive compute. Thus, the failure mode is not merely that PRMs are imperfect, but that frontier-only search makes local errors permanent.

We propose \textit{stochastic backtracking over a persistent pool} 
of historical prefixes. Instead of restricting the next search state to newly generated children, the search keeps previously generated prefixes eligible for future expansion. Importantly, persistent pooling does not store the full \textit{exponentially} growing search tree. Each expansion round generates a fixed budget of \(N\) new prefixes, so after \(t\) rounds the full persistent pool contains at most \( Nt\) prefixes. Persistent pooling is therefore a linear-memory alternative to irreversible frontier-only search.


Persistence alone, however, is not sufficient. Greedy selection over the full historical pool can still be dominated by a few over-scored incorrect prefixes, while uniform sampling from history wastes tokens on stale or low-quality states. We therefore introduce two stochastic selection rules that preserve access to underestimated prefixes while retaining PRM guidance. \textit{Subpool Selection} strengthens greedy PRM-guided search by selecting top candidates within random historical subpools, allowing prefixes blocked by frontier nodes to be reconsidered. \textit{Power Backtrack SMC} extends SMC-style sampling to sequential multiple-importance-sampling over a persistent pool using powered PRM scores and mixture-corrected weights, enabling stochastic recovery from earlier pruning decisions.

\vspace{-2mm}

\paragraph{Contributions.}
We introduce stochastic backtracking over a persistent pool of historical prefixes for token-efficient test-time scaling. We instantiate this framework with two complementary methods: Subpool Selection, which ranks candidates within random shortlists to bypass over-scored blockers, and Power Backtrack SMC, which extends SMC-style sampling to sequential multiple-importance-sampling over historical prefixes using powered PRM scores and mixture-corrected weights. Across mathematical reasoning benchmarks and model scales, our methods improve the accuracy--token trade-off over strong PRM-guided baselines, showing that historical access reduces premature commitment in test-time scaling. 

\vspace{-1mm}
\section{Related Work}

\vspace{-1mm}
\paragraph{Test-Time Scaling.} Test-time scaling improves LLM performance by increasing inference-time computation, typically through either parallel or sequential scaling. Parallel scaling generates multiple independent candidates and selects or aggregates among them, as in self-consistency \cite{wang2023selfconsistency,nguyen2024consistent, chen2024more}, Best-of-N \cite{brown2024large, lightman2024lets, snell2025scaling}, and related sampling-based approaches \cite{wang2025learning, macfarlane2025instilling}. In contrast, sequential scaling adaptively spends compute within a reasoning process, extending or revising trajectories through longer deliberation or instance-dependent compute \cite{jaech2024openai,guo2025deepseek,s1,park2025instance}. Hybrid scaling combines these modes by maintaining broad exploration across alternatives while sequentially expanding promising partial trajectories, including beam and lookahead search \cite{snell2025scaling}, Tree of Thoughts or Graph of Thoughts \cite{yao2023tree, besta2024graph}, and reward-filtered search procedures \cite{snell2025scaling,yao2023tree,besta2024graph,yu2025limits,li2025cascade}. Our method belongs to this hybrid family, but focuses on how the search state is retained and revisited under a token-efficiency objective. 

\vspace{-2mm}

\paragraph{Sequential Monte Carlo.} SMC methods maintain a weighted population of particles that are iteratively propagated, weighted, and resampled to approximate complex posterior distributions, making them a natural fit for sequential conditional generation tasks \cite{zhao2024probabilistic}. 
For language models, twisted SMC constructs learned proposal distributions that reshape generation toward high-reward completions \cite{zhao2024probabilistic}, with direct application to step-level resampling in mathematical reasoning \cite{feng2025stepbystep, puri2025probabilistic}. Related work explores controlled generation via posterior inference \cite{xefteri2025syntactic}, adaptive rejection sampling \cite{lipkin2025fast}, model ensembling \cite{chan2026ensembling}, and SMC-based steering \cite{lew2023sequential, grand2025selfsteering,loula2025syntactic}. 
Entropic Particle Filter~\cite{giannone2025mitigating} proposed to anneal the resampling weights of the particle filter, therefore regularize the resampling over the current frontier to avoid premature exploitation in test-time scaling; whereas our PB-SMC changes the support of resampling by keeping historical prefixes eligible across rounds. 

The persistent pool in our backtracking SMC method is motivated by Persistent Sampling \cite{karamanis2025persistent}, originally proposed for Bayesian parameter inference. In order to reduce the impoverishment of the set of parameter-represented particles overtime, persistent sampling allows particles from previous iterations to persist in a growing weighted ensemble and resampling from the mixture of past distributions using multiple-importance sampling~\cite{veach1995optimally}. We first extend persistent sampling framework to the case of sequential non-Markovian state space inference given a terminal likelihood, and then show its effective application for test-time scaling in LLM reasoning.


\vspace{-2mm}

\paragraph{Backtracking in LLMs.} Several works train or prompt models to revise their own outputs, either through iterative self-correction or refinement of completed solutions \cite{madaan2023selfrefine, qu2024recursive, havrilla2024glore, kumar2025training}. These methods learn a corrective policy over one or a few solution attempts, whereas our method keeps the base generator fixed and changes the test-time search state. Other work studies explicit backtracking as sequential search in language, showing that its benefits depend strongly on task structure and training procedure \cite{gandhi2024stream, qin2025to}. Closest to our setting are verifier-guided backtracking methods that use process feedback to revise partial generations: preemptive backtracking identifies a problematic step in a single trace and resamples its suffix \cite{singh2025improving}, while VGB \cite{rohatgi2026taming} treats generation as a verifier-guided random walk on a tree with probabilistic backtracking. 
In contrast, we maintain a persistent pool of historical prefixes across PRM-guided search and stochastically choose which prefixes to expand next, focusing on history-aware compute allocation rather than self-correction within a trajectory. 

Our Subpool Selection follows the selective-candidate-set idea in SeeA* \cite{zhao2024seea}, which samples a subset of all nodes and expands the best node within the subset rather than the globally best node; in our setting, the set of nodes becomes a persistent pool of PRM-scored reasoning prefixes, and the sampled subset helps greedy selection to bypass over-scored blockers under noisy PRM guidance.

\vspace{-1mm}
\section{Problem Setup and Frontier-Only Failure}
\vspace{-1mm}
\label{sec:prelim-smc}
In this section, we present the problem setup, the mechanism of known TTS methods used in this work and discuss the problem of irreversible frontier-only selection and noisy PRM score.  
\vspace{-1mm}
\subsection{Prefix, traces and PRM scores}
\vspace{-1mm}
\label{subsec:prefix-trace-prm}
Given a prompt $x_0$, let $x_t$ denote the $t$-th reasoning step generated by LLMs. Denote $x_{1:t} = (x_1,\ldots,x_t)$ be a prefix, 
the probability of the prefix is calculated by the LLM as follows 
$$p_{\mathrm{LLM}}(x_{1:t}\mid x_0)=\prod_{s=1}^t p_{\mathrm{LLM}}(x_s\mid x_0,x_{1:s-1}).$$ 
For simplicity,  we  suppress  $x_0$ from formulas. Let $x_{1:T}$ denote a complete trace with extracted answer $\Ans(x_{1:T})$ and $\phi(\Ans(x_{1:T}))\in\{0,1\}$ be the correctness indicator of the answer $x_{1:T}$.  The target is to estimate the density of correct answers
\vspace{-0.5mm}
\begin{equation} 
    \sigma(x_{1:T})
    \propto
    p_{\mathrm{LLM}}(x_{1:T})\,\phi(\Ans(x_{1:T})).
    \label{eq:true-terminal-target-main}
\end{equation}
Let  $\sigma(x_{1:t}) = \sum_{x_{t+1:T}} \sigma(x_{1:T})$.  
 If the marginal $\sigma(x_{1:t})$ is positive, then the prefix $x_{1:t}$ is correct up until time $t$ and we can roll out to obtain a final correct answer. Otherwise if it equals zero, then all of the continuation from $x_{1:t}$ will produce wrong answer. Subsequently, we also use the short notation $z$ to refer to a prefix when the context is clear.



In practice, PRMs such as Qwen2.5-Math-PRM-7B \cite{zhang2025lessons} are trained on binary labels rather than on the probability that the deployed base LLM will complete a prefix correctly. Therefore, the PRM score \(r_{\mathrm{PRM}}(x_{1:t})\) should be viewed as a learned proxy for prefix correctness, sensitive to the PRM labeling procedure, not as the ideal prefix target \(\sigma(x_{1:t})=\sum_{x_{t+1:T}}\sigma(x_{1:T})\) or as a calibrated value function for the generator~\cite{wang2024making, luo2025improve}. This proxy mismatch can affect test-time search.  A prefix may receive a high PRM score because it can be completed by the stronger rollout policy used during PRM supervision, even though the deployed base LLM is unlikely to complete it correctly.  Conversely, a prefix that the base LLM could complete may be under-scored because of rollout variance, annotation noise, or PRM generalization error.  Frontier-only pruning is therefore brittle: it can turn a local mismatch between PRM score and the ideal prefix target into an irreversible search decision.

\vspace{-1mm}
\subsection{PRM-guided search as prefix selection}
\label{subsec:prm-search-prefix-selection}
\vspace{-1mm}

The Best-of-$N$ method samples independent complete traces and make decision based on  the PRM's evaluation only on terminal candidates
 . The Beam search keeps a frontier of prefixes, expands selected parents, and deterministically retains the top-scoring children. Meanwhile, at step $t$, the SMC has in hand a pool of  weighted frontiers $\mathcal{P}_{t-1}$, from the previous step. The SMC samples from this pool to obtain a multiset and use LLM to extend it to obtain a new set of prefixes denote by $\mathcal{B}_t$. This multiset with normalized weights will become the new pool $\mathcal{P}_t$ for the next step. 

\vspace{-1mm}
\subsection{Irreversible frontier-only selection}
\vspace{-1mm}
\label{subsec:frontier-only-failure}

The above-mentioned methods have a weakness that is
if a prefix $z$ is not in the frontier $\New_t$, then it will be  absent in next pool $\Pool_t$. Consequently, it will be never considered again. However, because the PRM could be noisy, thus its score for correct prefix could lower compared to that of incorrect prefixes, there is a chance that prefix is missed out due to its low score. Then  the loss of missing a correct answer is incurred permanently just because the mechanism of frontier-only selection. To solve this problem, we introduce the persistent-pool and backtracking mechanisms in the next section.

\vspace{-1mm}
\section{Persistent-Pool Stochastic Backtracking}
\label{sec:methodology}
\vspace{-1mm}


\subsection{Master Algorithm}
\label{subsec:master-algorithm}
\vspace{-1mm}

At step $t$, the algorithm maintains a multiset $\Pool_{t-1}$ of prefixes, PRM scores $r_{\mathrm{PRM}}(z)$, and selection weights $W_{t-1}(z)\ge0$ for any $z\in \mathcal{P}_{t-1}$.  A selection rule $f_t$ chooses a multiset of $M$ parents $\Cand_t=f_t(\Pool_{t-1};W_{t-1})$.  Each selected parent is expanded by the base LLM to produce $B=N/M$ children, so there will be exactly $N$ new prefixes; we denote this child multiset by $\New_t$.  An update rule $G_t$ is to construct a new pool of prefixes $(\Pool_t,W_t)=G_t(\Pool_{t-1},\New_t)$ with new weights $W_t$.
Since the new pool $\mathcal{P}_t$ could consist of prefixes from pool $\mathcal{P}_{t-1}$ of the previous step, we name it the \emph{persistent pool}. In our proposed algorithms 
Subpool Selection and backtracking SMC, we use different mechanisms to build the persistent pool.

\vspace{-2mm}

\begin{algorithm}[H]
\caption{Persistent-Pool Master Algorithm}
\label{alg:master-particle-search}
\begin{algorithmic}[1]
\Require prompt $x_0$, child budget $N$, parent budget $M$, horizon $T$, selection rules $f_t$, memory updates $G_t$.
\State Initialize $\Pool_0$ with $N$ one-step samples from $p_{\mathrm{LLM}}(\cdot\mid x_0)$ and set $W_0(z)\gets r_{\mathrm{PRM}}(z)$ for all $z\in\Pool_0$.
\For{$t=1,\ldots,T$}
    \State Select parents $\Cand_t\gets f_t(\Pool_{t-1};W_{t-1})$ as a multiset of size $M$.
    \State Generate $\New_t\gets\bigcup_{c\in\Cand_t}\textsc{LLM-Expand}(c,B)$ with $B=N/M$ children per parent.
    \State For every $z\in\New_t$ compute its PRM score $r_{\mathrm{PRM}}(z)$.
    \State Update memory and weights: $\Pool_t, W_t \gets G_t(\Pool_{t-1},\New_t)$.
\EndFor
\State Return final candidates from $\Pool_T$ by terminal ranking.
\end{algorithmic}
\end{algorithm}





\vspace{-2mm}

\begin{figure*}[t]
    \centering
    \includegraphics[width=\textwidth]{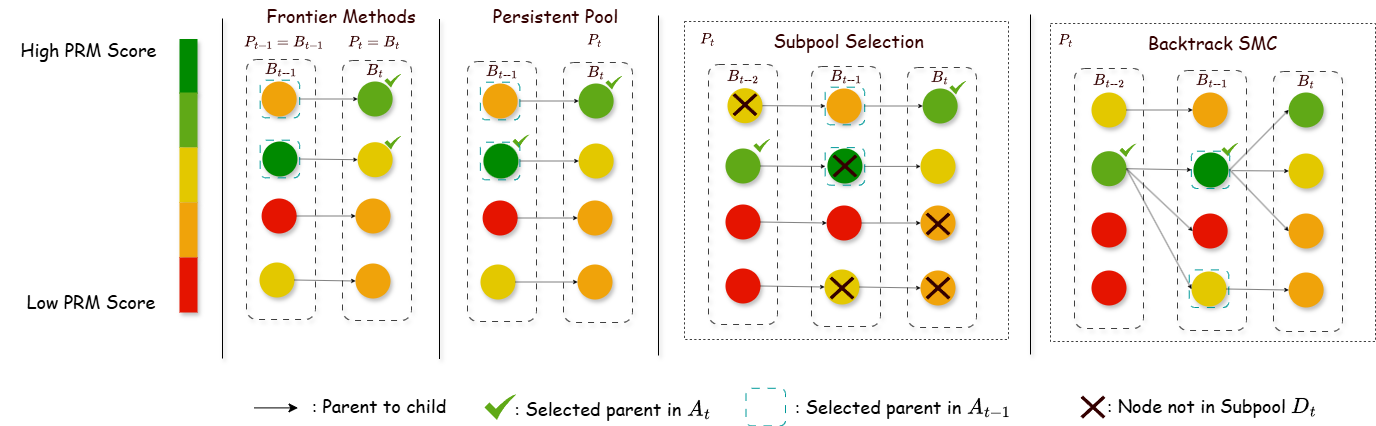}
    \caption{
    Schematic comparison of frontier-only search and persistent-pool backtracking.
    Frontier-only methods keep only the newly generated children $\New_t$, so prefixes not selected into the next frontier cannot be revisited.
    Greedy Selection keeps a persistent pool and applies global top ranking, but older correct prefixes are blocked by (always scored lower than) prefixes already selected at $t-1$.
    SPS resolves this by ranking only inside a random subpool, giving blocked historical prefixes another chance to be expanded.
    Power Backtrack SMC keeps historical prefixes through weighted resampling, combining a retained historical multiset $S_t$ with newly generated children $\New_t$.
    }
    \label{fig:persistent-pool-schematic}
\end{figure*}
\vspace{-3mm}
Figure~\ref{fig:persistent-pool-schematic} summarizes the key idea: stochastic backtracking preserves PRM guidance, but changes the memory and selection mechanism so that historical prefixes remain eligible for future compute.

To keep peak memory bounded, we store KV caches only for prefixes currently being expanded.  At step $t$, we maintain KV states only for the generated children $\New_t$.  Historical prefixes in $\Pool_t$ are kept as text states and PRM scores, not as live KV states.  When the search backtracks to an older prefix $z=x_{1:i}$, we recompute its KV state before expansion.  Thus backtracking adds a small compute overhead, but peak KV memory remains bounded by the $N$ rather than by $|\Pool_t|$. Our experiment results in~\ref{sec:acc_budget} show that the compute overhead is negligible compared to the gain in accuracy.

A persistent pool will grow with time and the low-quality prefixes could remain in the pool because of the selectors. For example, a naive global top $M$ selection rule will keep choosing the over-scored incorrect prefixes overtime, meanwhile a uniform sampling method could select low-score incorrect prefixes. These observations motivate us to introduce two types of algorithms based on adjusting searching and sampling. More particular, we introduce 
Subpool Selection to search for potential correct prefixes (Section~\ref{subsec:sps-method}) . For sampling direction, we introduce Power Backtrack SMC by sampling with replacement from a powered, mixture-corrected persistent pool (Section~\ref{subsec:power-backtrack-smc}).      



\vspace{-1mm}
\subsection{Instantiations of the master algorithm}
\label{subsec:template-instantiations}
\vspace{-1mm}

Algorithm~\ref{alg:master-particle-search} is a \textsl{generalized} algorithm covering both frontier-only baselines and our persistent-pool variants. In Table~\ref{tab:master-instantiations}, the selection rule $\mathrm{Top}(\mathcal{P}_{t-1},W_{t-1}, M) $ means we choose top $M$ elements from the pool $\mathcal{P}_{t-1}$ based on the associated weights $W_{t-1}$, and sample with replacement with $M = N$ means sampling from multinominal distribution built on pool $\mathcal{P}_{t-1}$ with normalized weights $W_t$. 
In our PB-SMC, we set $M=N$: each selected parent generates one child, matching the standard resample-and-propagate particle update. 
For a newly generated prefix $z\in\New_t$, let $\pa(z)$ denote the parent prefix of $z$.

\begin{table}[H]
\vspace{-3mm}
\centering
\scriptsize
\renewcommand{\arraystretch}{1.12} 
\begin{tabular}{p{0.20\linewidth}p{0.27\linewidth}p{0.45\linewidth}}
\toprule
Method & Parent selection $f_t$ & Memory update $G_t$ \\
\midrule
Beam search
& $\mathrm{Top}(\mathcal{P}_{t-1}, W_{t-1},M)$.
& Frontier-only: $\Pool_t=\New_t$ and $W_t(z)=r_{\mathrm{PRM}}(z)$ for $z\in\New_t$. \\
\addlinespace[1.5pt]

Standard SMC
& Sample with replacement with $M=N$ from $\mathcal{P}_{t-1}$ with weights $W_{t-1}$.
& Frontier-only: $\Pool_t=\New_t$ with incremental weights, e.g. $W_t(z)=r_{\mathrm{PRM}}(z)/r_{\mathrm{PRM}}(\pa(z))$. \\
\addlinespace[1.5pt]

Greedy Selection
& $\mathrm{Top}(\mathcal{P}_{t-1}, W_{t-1},M)$.
& Persistent union: $\Pool_t=\Pool_{t-1}\uplus\New_t$ and $W_t(z)=r_{\mathrm{PRM}}(z)$. \\
\addlinespace[1.5pt]

Subpool Selection (SPS)
& sample $\Subpool_t \subseteq \Pool_{t-1}$, then $\mathrm{Top}(\Subpool_t, W_{t-1}, M)$.
& Same persistent union and weights as Greedy Selection; only the selection rule changes; see \eqref{eq:sps-subpool-size-method}. \\
\addlinespace[1.5pt]

Power Backtrack SMC
& Sample with replacement with $M=N$ from $\mathcal{P}_{t-1}$ with weights $W_{t-1}$.
& Sample with replacement $Nt$ particles from $(\Pool_{t-1}; W_{t-1})$ and union with $\New_t$; assign mixture-proposal weights by \eqref{eq:pbsmc-weight-main}. \\
\bottomrule
\end{tabular}
\vspace{2mm}
\caption{Instantiation of Algorithm~\ref{alg:master-particle-search}.  Beam search and standard SMC are frontier-only because their next pool contains only $\New_t$.  Greedy Selection, SPS, and Power Backtrack SMC keep historical prefixes selectable, either by multiset union or by resampling historical particles.}
\label{tab:master-instantiations}
\vspace{-3mm}
\end{table} 

\vspace{-3mm}
\section{Algorithms}
\label{sec:algorithms}
\vspace{-1mm}

We now describe our two algorithms in details: Subpool Selection and Power Backtrack SMC. 

\vspace{-1mm}
\subsection{Subpool Selection}
\label{subsec:sps-method}
\vspace{-1mm}
We first introduce persistent Greedy Selection algorithm and explain the blocker problem. Then we present our Subpool Sampling algorithm.
\vspace{-1mm}
\subsubsection{Greedy Selection and the blocker problem}
\label{subsubsec:persistent-TopM-blocker}

Let us consider the simple case where $\mathrm{Top}(\mathcal{P}_{t-1}, W_{t-1},M)$  as selection rule is greedy;
at each round, it simply selects $M$ parents by deterministic top $M$ ranking of the scores, expands each selected parent to generate $B=N/M$ children, and retains both old and new prefixes by updating $\Pool_t=\Pool_{t-1}\uplus\New_t$.  It then assigns $W_t(z)=r_{\mathrm{PRM}}(z)$ to every retained prefix and selects the next parents as $\Cand_{t+1}=\TopPool{\Pool_t}{W_t}{M}$.  The full procedure 
is given in \cref{alg:persistent-TopM} (see the Appendix).

Now suppose that the selected parents at time $t$ are the current global top $M$ prefixes.  Because their scores are at least as large as the scores of all unselected previous particles, the next global top $M$ prefixes will be chosen from only those parents and their newly generated children. 
Hence, greedy selection can never revisit or backtrack to any prefix older than the parents. We name this the greedy selection blocker problem. To deal address this, we next present our proposed algorithm named Subpool Selection (SPS) combining random subpool and top $M$ selection.

\vspace{-1mm}
\subsubsection{Subpool Selection (SPS)}
\label{subsubsec:sps-random-subpool}
\vspace{-1mm}

SPS constructs a persistent pool, but instead selecting top prefixes from the whole pool, it does the selection on a random subpool $\Subpool_t\subseteq \Pool_{t-1}$  of size  
\begin{equation}
    K_t=|\Subpool_t|=\max\{M,\lfloor\rho_t|\Pool_{t-1}|\rfloor\},
    \label{eq:sps-subpool-size-method},
\end{equation}
where $\rho_t$ is the subpooling ratio at time step $t$.  In practice, the PRM takes value between $[0,1]$, we use an adaptive subpooling ratio as the PRM mean $\rho_t=|\Pool_{t-1}|^{-1}\sum_{z\in\Pool_{t-1}}r_{\PRM}(z)$. By selecting the top $M$ candidates within a random subpool, a correct prefix that is poorly ranked in the global pool has a higher chance of being selected. 
This mirrors selective-sampling heuristic search such as SeeA$^*$, where the expanded node is best in a sampled subset rather than necessarily best in a larger set \cite{zhao2024seea}. 
The size of subpool depends on parameter $\rho_t$. A smaller value $\rho_t$ means that the PRM has weak confidence over the whole pool, then we should keep a smaller subpool to avoid the high-score incorrect prefixes and otherwise. 


\vspace{-1mm}
\subsection{Power Backtrack SMC}
\label{subsec:power-backtrack-smc}

We now describe our approach to introduce backtracking to SMC-sampling methods. 

\vspace{-1mm}
\subsubsection{Persistent-pool target}
\label{subsubsec:pbsmc-target}

In Power Backtrack SMC (PB-SMC), the initial pool $\Pool_0$ consists of one-step samples, after round $t$ the pool can contain prefixes with length at most $t+1$; see Algorithm~\ref{alg:power-backtrack-smc} (Appendix).  Let $\Xspace_s$ be the set of prefixes with exactly $s$ reasoning steps. 
%
In step $t$, our algorithm will sample from $\Xspace_{\le t+1}:=\bigcup_{s=1}^{t+1}\Xspace_s$. 
This is the key departure from standard frontier-only SMC, whose prefixes at step $t$ are restricted to the current frontier $\Xspace_{t+1}$.

Recall that $p_{\mathrm{LLM}}(z)$ is the LLM prefix probability and and $r_{\mathrm{PRM}}(z)>0$ is the PRM score of prefix $z$. For any prefix $z$, we use function $\mathrm{len}$ to indicate the length of $z$ . 
For $\beta_t\ge0$ and $z\in \mathcal{X}_{\leq T+1}$,  we define the powered density function at step $t$
\begin{equation}
    \pi_t^{(\beta_t)}(z)
    :=
    \frac{\widetilde\pi_t^{(\beta_t)}(z)}
    {\sum_{y\in\Xspace_{\le T+1}}\widetilde\pi_t^{(\beta_t)}(y)} \quad  \text{where} \quad \widetilde\pi_t^{(\beta_t)}(z)
    :=
    p_{\mathrm{LLM}}(z)r_{\mathrm{PRM}}(z)^{\beta_t}\ind\{\length(z)\le t+1\}.
    \label{eq:powered-historical-target-main}
\end{equation}
Using the power $\beta_t$ on the PRM score will downweight low-score prefixes and accenturate high-score prefixes before normalizing and sampling. 


\vspace{-1mm}
\subsubsection{Expansion and memory}
\label{subsubsec:pbsmc-expansion-memory}

At round $t$, we sample $N$ parents with replacement from $\Pool_{t-1}$ using normalized weights $W_{t-1}$ and expands each selected parent to produce the new-child multiset $\New_t$.  
To keep a pool of size $Nt$ of important historical prefixes, 
 PB-SMC samples a retained multiset $\mathcal{S}_t$ of size $Nt$ with replacement from $\Pool_{t-1}$ with normalized weights $W_{t-1}$. Then we form a joint pool of historical prefixes and newly-generated prefixes  $\Pool_t=\mathcal{S}_t\uplus\New_t$ with  $|\Pool_t|=N(t+1)$. 
At time $t$, we have a mixture proposal,
$q_t^{\mathrm{mix}}=\alpha_t q_t^{\mathrm{new}}+(1-\alpha_t)q_t^{\mathrm{hist}}$,
similar to multiple-importance-sampling~\citep{veach1995optimally}.  The $q_t^{\mathrm{new}}$ is the distribution of prefix from the previous powered distribution and applies one LLM expansion step, while the $q_t^{\mathrm{hist}}$ is distribution of existing prefix from the previous powered distribution without expansion.  The coefficient $\alpha_t$ controls the balance between newly generated children and retained historical particles.  Appendix~\ref{app:pbs-proof} gives
the formulas for 
the weights and the parameters $\alpha_t$ and $\beta_t$. 


\vspace{-1mm}
\subsubsection{Mixture-corrected weights}
\label{subsubsec:pbsmc-mixture-weights}

For prefix $z\in\Pool_t$, let $\pa(z)$ denote its  parent prefix, if $\pa(z)=x_0$, then $\mathrm{len}(z)=1$.  Set $r_{\PRM}(x_0)=1$.  The weight update assigns unnormalized weights inductively
\begin{equation}
    W_t(z)
    :=
    \begin{cases}
    \alpha_t F_t(z), & z\in\New_t,\\[1mm]
    (1-\alpha_t)\dfrac{F_t(z)}{t}, & z\in \mathcal{S}_t,
    \end{cases}
    \label{eq:pbsmc-weight-main}
\end{equation}
where
\begin{equation}
    F_t(z)
    :=
    \frac{
    \left(\dfrac{r_{\mathrm{PRM}}(z)}{r_{\mathrm{PRM}}(\pa(z))}\right)^{\beta_{t-1}}
    r_{\mathrm{PRM}}(z)^{\beta_t-\beta_{t-1}}
    }
    {
    \alpha_t\ind\{\mathrm{len}(z)\ge2\}
    +(1-\alpha_t)
    \left(\dfrac{r_{\mathrm{PRM}}(z)}{r_{\mathrm{PRM}}(\pa(z))}\right)^{\beta_{t-1}}
    \ind\{\mathrm{len}(z)\le t\}
    }.
    \label{eq:power-backtrack-factor-main}
\end{equation}
\begin{theorem}[PB-SMC targets the powered proxy distribution]
\label{thm:pbsmc-consistency-main}
Fix horizon $T$ and a finite prefix space $\Xspace_{\le T+1}$.
Assume $0<R_{\min}\le r_{\mathrm{PRM}}(z)\le R_{\max}<\infty$ for some constants $R_{\min}$ and $R_{\max}$, $0<\alpha_t<1$, and the LLM expansion kernel is normalized.  Let $\widehat\pi_t^N$ be the weighted empirical distribution (produced by PB-SMC) with support $\Pool_{t}$ and associated weight $ W_{t}$.  Then, for every fixed $t\le T$ and bounded function $f:\Xspace_{\le T+1}\to\mathbb{R}$,
\[
    E_{\widehat\pi_t^{(N)}}[f(X)]
    :=
    \frac{\sum_{z\in\Pool_t}W_t(z)f(z)}
    {\sum_{z\in\Pool_t}W_t(z)}
    \xrightarrow[N\to\infty]{p}
    E_{\pi_t^{(\beta_t)}}[f(X)].
\]
If $\beta_t$ is computed by the schedule in Appendix~\ref{app:pbsmc-schedules}, the same statement holds for the adaptive powers.
\end{theorem}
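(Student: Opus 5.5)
We argue by induction on $t$, in the style of the standard consistency proofs for sequential importance resampling. The induction hypothesis at round $t-1$ is that the self-normalized weighted empirical measure $\widehat\pi_{t-1}^{(N)}$ on $\Pool_{t-1}$ satisfies $E_{\widehat\pi_{t-1}^{(N)}}[f]\to_p E_{\pi_{t-1}^{(\beta_{t-1})}}[f]$ for every bounded $f$. Because $\Xspace_{\le T+1}$ is finite, this is equivalent to convergence in probability of the finitely many point masses $\widehat\pi_{t-1}^{(N)}(\{y\})\to\pi_{t-1}^{(\beta_{t-1})}(y)$. We can therefore work pointwise on the finite space and pass freely between ``all bounded $f$'' and ``all indicators.''

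The base case $t=0$ is ordinary self-normalized importance sampling. $\Pool_0$ consists of $N$ i.i.d.\ draws from $p_{\mathrm{LLM}}(\cdot\mid x_0)$ on $\Xspace_1$, with weights $r_{\mathrm{PRM}}^{\beta_0}$. The weak law of large numbers, applied separately to numerator and denominator (both bounded, since $r_{\mathrm{PRM}}\in[R_{\min},R_{\max}]$), together with the continuous mapping theorem gives convergence to $\pi_0^{(\beta_0)}$.

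For the inductive step we first identify the mixture proposal exactly. Conditionally on round $t-1$, the $N$ parents and the $Nt$ retained particles are i.i.d.\ draws from $\widehat\pi_{t-1}^{(N)}$. Hence a new child has conditional law $\widehat q^{\mathrm{new}}_t(z)=\widehat\pi_{t-1}^{(N)}(\pa(z))\,p_{\mathrm{LLM}}(z\mid\pa(z))$, and a retained particle has law $\widehat q^{\mathrm{hist}}_t=\widehat\pi_{t-1}^{(N)}$. The core computation is the following identity. Write $q^{\mathrm{new}}_t,q^{\mathrm{hist}}_t$ for the same expressions with $\pi_{t-1}^{(\beta_{t-1})}$ in place of its estimate. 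We must show that the factor $F_t$ in \eqref{eq:power-backtrack-factor-main} equals, up to a $z$-independent constant $c_t$, the multiple-importance-sampling ratio
\[
\frac{\widetilde\pi_t^{(\beta_t)}(z)}{\alpha_t q^{\mathrm{new}}_t(z)+(1-\alpha_t)q^{\mathrm{hist}}_t(z)}.
\]
To see this, use $p_{\mathrm{LLM}}(z)=p_{\mathrm{LLM}}(\pa(z))p_{\mathrm{LLM}}(z\mid\pa(z))$ and divide numerator and denominator by $p_{\mathrm{LLM}}(z)\,r_{\mathrm{PRM}}(\pa(z))^{\beta_{t-1}}$. This produces exactly the ratio $(r(z)/r(\pa z))^{\beta_{t-1}}$ and the indicators $\ind\{\length(z)\ge2\}$ (the new-child branch) and $\ind\{\length(z)\le t\}$ (the history branch). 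With the balance-heuristic weights of \citet{veach1995optimally}, the estimator $\sum_{z\in\Pool_t}W_t(z)f(z)/N$ then has the structure $\alpha_t\cdot\frac1N\sum_{\New_t}+(1-\alpha_t)\cdot\frac1{Nt}\sum_{\mathcal S_t}$, matching the $1/t$ factor in \eqref{eq:pbsmc-weight-main}. The bounds on $r_{\mathrm{PRM}}$ and $0<\alpha_t<1$ make $F_t$ uniformly bounded whenever the mixture denominator is positive. That denominator is positive on the support of $\pi_t^{(\beta_t)}$ because $\ind\{\length\ge2\}+\ind\{\length\le t\}\ge1$ for every $z\in\Xspace_{\le t+1}$.

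Next we prove the limit for the unnormalized sums. Conditional on round $t-1$, each of $\frac1N\sum_{z\in\New_t}F_t(z)f(z)$ and $\frac1{Nt}\sum_{z\in\mathcal S_t}F_t(z)f(z)$ is an average of i.i.d.\ bounded terms. A conditional Chebyshev bound gives conditional variance $O(\|F_tf\|_\infty^2/N)$, so each average is within $o_p(1)$ of its conditional mean, which is $\sum_z \widehat q^{\,\cdot}_t(z)F_t(z)f(z)$. By the induction hypothesis and finiteness of the space, $\widehat q^{\,\cdot}_t\to_p q^{\,\cdot}_t$ pointwise. Since $F_t$ is a fixed bounded function, the continuous mapping theorem yields
\[
\alpha_t\sum_z q^{\mathrm{new}}_tF_tf+(1-\alpha_t)\sum_z q^{\mathrm{hist}}_tF_tf=c_t\sum_z\widetilde\pi_t^{(\beta_t)}(z)f(z).
\]
Taking the ratio with $f\equiv1$ gives the theorem, because the constant $c_t$ and the normalizers cancel.

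For the adaptive schedule, $\beta_t$ is a continuous functional of the current weighted empirical measure, for example a solution of an ESS-type equation. We would show that $\widehat\beta_t\to_p\beta_t^\star$, the population value, and then use that $F_t$ depends continuously and uniformly on $\beta$ over a compact range, since $r_{\mathrm{PRM}}\in[R_{\min},R_{\max}]$.

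The main obstacle is the exact algebraic matching of $F_t$ with the MIS ratio. Two points need care. First, the history component is supported on $\Xspace_{\le t}$ while new children live in $\Xspace_{\ge2}$, and the target's density must be expressed relative to $p_{\mathrm{LLM}}(\pa(z))$ in both branches. Second, the retained particles come from a mixture over all earlier rounds, so checking that $q^{\mathrm{hist}}_t$ is the previous powered target, rather than a deeper mixture, depends on the induction hypothesis. If the paper's $1/t$ normalization instead reflects a persistent-sampling mixture over all previous targets, we would strengthen the hypothesis to track $\frac1t\sum_{s<t}\pi_s^{(\beta_s)}$ and redo the identity with that denominator. A secondary subtlety in the adaptive case is showing that $\widehat\beta_t$ converges, which would require the schedule equation to have a unique, nondegenerate root.
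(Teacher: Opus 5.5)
Your proposal is correct and follows essentially the same route as the paper. Both use induction on $t$ with an SNIS base case, identify $F_t$ as $\widetilde\pi_t^{(\beta_t)}/\widetilde q_t^{\mathrm{mix}}$ for a mixture whose two components share the normalizer $Z_{t-1}$, and use the fixed-count estimator with $N$ new and $Nt$ retained samples to explain the $1/t$ factor. Your conditional Chebyshev step, for samples drawn from the empirical $\widehat\pi_{t-1}^{(N)}$, is in fact more careful than the paper's direct appeal to i.i.d.\ SNIS consistency.

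Both of your hedges are unnecessary. $\mathcal S_t$ is drawn directly from $\widehat\pi_{t-1}^{(N)}$, so no deeper mixture over earlier targets arises. The adaptive schedule is the explicit update $\beta_{t+1}=\beta_t+\gamma\bigl(1-(\sigma_t-1/C_t)\bigr)$, not the root of an ESS equation, so convergence of $\beta_t$ follows directly by continuous mapping.
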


Theorem~\ref{thm:pbsmc-consistency-main} says that the Power Backtrack SMC estimator converges in probability to the powered PRM-defined proxy target at each fixed round, ensuring asymptotic correctness with respect to the proxy objective.

\vspace{-2mm}
\section{Experiments}
\vspace{-1mm}

We evaluate whether persistent-pool backtracking improves the accuracy--compute trade-off in PRM-guided test-time scaling. Section~\ref{sec:setup} summarizes our experimental setup. We then report main results across four mathematical reasoning benchmarks, analyze scaling behavior under varying token and runtime budgets, and study the effects of historical backtracking and stochastic persistent-pool access.



\vspace{-2mm}

\vspace{-1mm}
\subsection{Experimental Setup}
\label{sec:setup}

\vspace{-1mm}
\paragraph{Datasets.}
We evaluate on MATH500~\cite{lightman2024lets}, AMC23~\cite{amc2023}, AIME24~\cite{aime24}, and Minerva Math~\cite{lewkowycz2022solving}, covering intermediate to advanced mathematical reasoning tasks. Full details are in Appendix~\ref{app:exp_setup}.

\vspace{-2mm}
\paragraph{Models and PRMs.}
We conduct experiments with three language models of varying scales: Qwen2.5-7B-Instruct, Qwen2.5-3B-Instruct~\cite{qwen2.5}, and Phi-4-mini-Instruct~\cite{abouelenin2025phi}.
In the main paper, we use Qwen2.5-Math-PRM-7B~\cite{zhang2025lessons} as the default PRM.


\vspace{-2mm}
\paragraph{Baselines and Implementation.} 
We compare with model-default decoding, Self-Consistency~\cite{wang2023selfconsistency}, Best-of-N~\cite{brown2024large}, Beam Search~\cite{snell2025scaling}, SMC~\cite{puri2025probabilistic}, MCTS~\cite{inoue2025widerdeeperscalingllm}, and DVTS~\cite{beeching2024scaling}. 
We additionally report Pass@32~\cite{chen2021evaluating}, computed over 32 independently sampled solutions, as a reference for the performance attainable with a larger sampling budget. 
All sampling methods use temperature $0.7$. 
Baselines use $N=32$ expansions per step, whereas PB-SMC uses $N=8$. For all main experiments with SPS, we set $M=N=8$, allowing us to evaluate whether persistent-pool backtracking can achieve competitive performance under a smaller expansion budget. 
Results are averaged over 5 seeds for AMC23 and AIME24, and 3 seeds for Minerva Math and MATH500.





\vspace{-1mm}
\subsection{Main Results}
\label{sec:results}

Table~\ref{tab:main_search} presents our main experimental results comparing PB-SMC and SPS against competitive baselines across four mathematical reasoning benchmarks.
We report results for two representative LLMs in the main paper; more results including all three LLMs are provided in Appendix~\ref{sec:full_results}.
Across the reported model--benchmark pairs, our proposed methods consistently match or outperform competitive baselines while using substantially fewer tokens.
Notably, both methods achieve competitive accuracy using only $N=8$ candidates compared to $N=32$ candidates for all baselines.
Averaged across all model--benchmark pairs (Appendix~\ref{sec:full_results}), SPS improves accuracy while reducing token consumption by 49.4\%, and PB-SMC improves accuracy with a 36.7\% token reduction.


On Qwen2.5-7B-Instruct, SPS achieves the best overall performance, reaching $54.66\%$ average accuracy and outperforming strong PRM-guided baselines such as DVTS ($51.40\%$) and Beam Search ($52.12\%$) while consuming fewer generated tokens. On Phi-4-mini-Instruct, PB-SMC obtains the highest overall accuracy ($48.69\%$). These results show that the two proposed backtracking methods provide consistent gains across different LLMs, with SPS performing strongest on Qwen2.5-7B-Instruct and PB-SMC performing strongest on Phi-4-mini-Instruct.

\begin{table*}[t]
\vspace{-3mm}
\centering
\scriptsize
\caption{%
    \textbf{Comparison of test time scaling methods across benchmarks and models.} 
    We report accuracy (Acc, $\uparrow$ better) and token usage (\#Tokens, $\downarrow$ better) across four benchmarks.
    Results include mean and standard deviation over 5 random seeds for AMC23 vs AIME24 and 3 random seeds for Minerva vs MATH500.
    \textbf{Bold} indicates best performance; \underline{underline} indicates runner-up.
}
\label{tab:main_search}
\resizebox{\linewidth}{!}{%
\begin{tabular}{l c cc cc cc cc cc}
\toprule
& & \multicolumn{2}{c}{\textbf{Minerva}}
& \multicolumn{2}{c}{\textbf{Math500}}
& \multicolumn{2}{c}{\textbf{AMC23}}
& \multicolumn{2}{c}{\textbf{AIME24}}
& \multicolumn{2}{c}{\textbf{Average}} \\
\cmidrule(lr){3-4}\cmidrule(lr){5-6}\cmidrule(lr){7-8}\cmidrule(lr){9-10}\cmidrule(lr){11-12}
\textbf{Method} & \textbf{N} &
\textbf{Accuracy\,$\uparrow$} & \textbf{\#Tokens\,$\downarrow$} &
\textbf{Accuracy\,$\uparrow$} & \textbf{\#Tokens\,$\downarrow$} &
\textbf{Accuracy\,$\uparrow$} & \textbf{\#Tokens\,$\downarrow$} &
\textbf{Accuracy\,$\uparrow$} & \textbf{\#Tokens\,$\downarrow$} &
\textbf{Accuracy\,$\uparrow$} & \textbf{\#Tokens\,$\downarrow$} \\
\midrule

\multicolumn{12}{c}{\textbf{Qwen2.5-7B-Instruct}} \\
\cmidrule{1-12}

Base Model & 1 & \sd{34.68}{1.12} & 617 & \sd{76.00}{0.80} & 584 & \sd{54.50}{2.92} & 874 & \sd{11.33}{1.63} & 1039 & 44.13 & 779 \\
\addlinespace[2pt]

Pass@\textit{k}$^\dagger$ & 32 & \sd{52.33}{0.21} & 19765 & \sd{93.67}{0.12} & 18788 & \sd{90.50}{2.45} & 27979 & \sd{31.33}{3.40} & 33262 & 66.96 & 24949 \\
\addlinespace[2pt]

Self-Consistency & 32 & \sd{38.85}{0.56} & 19765 & \sd{83.27}{0.31} & 18788 & \sd{62.00}{1.87} & 27979 & \sd{16.67}{0.00} & 33262 & 50.20 & 24949 \\
\addlinespace[2pt]

Best-of-N & 32 & \sd{38.48}{0.42} & 19765 & \sd{86.40}{0.20} & 18788 & \sd{65.00}{1.58} & 27564 & \textbf{\sd{20.67}{1.33}} & 33344 & 52.64 & 24865 \\
\addlinespace[2pt]

Beam Search & 32 & \sd{40.20}{0.56} & 23900 & \sd{86.27}{0.31} & 22771 & \sd{62.00}{2.92} & 39195 & \sd{20.00}{2.11} & 55150 & 52.12 & 35254 \\
\addlinespace[2pt]

DVTS & 32 & \sd{39.22}{0.43} & 23549 & \sd{86.73}{0.23} & 22978 & \sd{63.00}{1.00} & 34670 & \sd{16.67}{2.11} & 46725 & 51.41 & 31981 \\
\addlinespace[2pt]

MCTS & 32 & \sd{36.27}{0.87} & 25208 & \sd{78.47}{0.81} & 22006 & \sd{58.00}{1.00} & 30328 & \sd{16.67}{1.33} & 35644 & 47.35 & 28297 \\
\addlinespace[2pt]

SMC & 32 & \sd{40.32}{0.94} & 21642 & \sd{86.67}{0.41} & 23307 & \sd{62.00}{3.54} & 35546 & \sd{18.65}{2.66} & 41031 & 51.91 & 30382 \\
\addlinespace[2pt]

\cmidrule{1-12}
\addlinespace[1pt]

\textbf{PB-SMC (ours)} & 8 & \underline{\sd{41.79}{0.56}} & \underline{13216} & \underline{\sd{87.47}{0.31}} & \underline{10628} & \underline{\sd{66.50}{2.85}} & \textbf{17637} & \sd{19.33}{1.50} & \underline{27581} & \underline{53.77} & \underline{17266} \\
\addlinespace[2pt]


\textbf{SPS (ours)} & \textbf{8} & \textbf{\sd{43.26}{0.77}} & \textbf{9387} & \textbf{\sd{87.87}{0.70}} & \textbf{9076} & \textbf{\sd{67.50}{2.50}} & \underline{22446} & \underline{\sd{20.00}{2.37}} & \textbf{24347} & \textbf{54.66} & \textbf{16314} \\
\addlinespace[2pt]

\midrule

\multicolumn{12}{c}{\textbf{Phi-4-mini-Instruct}} \\
\cmidrule{1-12}

Base Model & 1 & \sd{23.90}{1.11} & 535 & \sd{62.80}{0.80} & 525 & \sd{42.50}{3.54} & 781 & \sd{7.33}{3.27} & 948 & 34.13 & 697 \\
\addlinespace[2pt]

Pass@\textit{k}$^\dagger$ & 32 & \sd{54.53}{0.21} & 17119 & \sd{91.07}{0.12} & 16774 & \sd{79.00}{3.39} & 25003 & \sd{28.67}{5.81} & 30316 & 63.32 & 22303 \\
\addlinespace[2pt]

Self-Consistency & 32 & \sd{33.70}{0.21} & 17119 & \sd{77.00}{0.40} & 16774 & \sd{53.50}{2.55} & 25003 & \sd{13.33}{0.00} & 30316 & 44.38 & 22303 \\
\addlinespace[2pt]

Best-of-N & 32 & \sd{34.80}{0.42} & 17119 & \sd{80.80}{0.20} & 16774 & \sd{50.50}{4.30} & 25003 & \sd{14.67}{2.67} & 30316 & 45.19 & 22303 \\
\addlinespace[2pt]

Beam Search & 32 & \sd{33.46}{0.37} & 23768 & \sd{81.20}{0.20} & 21397 & \sd{56.50}{4.06} & 37513 & \underline{\sd{16.00}{3.89}} & 39635 & 46.79 & 30578 \\
\addlinespace[2pt]

DVTS & 32 & \sd{33.82}{0.74} & 22140 & \sd{81.53}{0.12} & 20956 & \sd{55.00}{2.24} & 36815 & \underline{\sd{16.00}{3.27}} & 37532 & 46.59 & 29361 \\
\addlinespace[2pt]

MCTS & 32 & \sd{32.75}{2.08} & 23108 & \sd{73.20}{0.56} & 28036 & \sd{55.00}{1.50} & 33320 & \sd{14.67}{1.66} & 30316 & 43.91 & 28695 \\
\addlinespace[2pt]

SMC & 32 & \sd{34.44}{0.75} & 24907 & \sd{81.40}{0.59} & 25459 & \sd{54.50}{3.32} & 42574 & \sd{14.00}{0.98} & 45448 & 46.09 & 34597 \\
\addlinespace[2pt]

\cmidrule{1-12}
\addlinespace[1pt]

\textbf{PB-SMC (ours)} & 8 & \textbf{\sd{37.87}{0.37}} & \underline{13198} & \underline{\sd{82.07}{0.50}} & \underline{10422} & \textbf{\sd{57.50}{3.06}} & \underline{15547} & \textbf{\sd{17.33}{3.89}} & \underline{21258} & \textbf{48.69} & \underline{15106} \\
\addlinespace[2pt]


\textbf{SPS (ours)} & \textbf{8} & \underline{\sd{35.42}{0.56}} & \textbf{9832} & \textbf{\sd{82.67}{0.50}} & \textbf{7959} & \underline{\sd{57.00}{2.92}} & \textbf{13497} & \underline{\sd{16.00}{1.33}} & \textbf{15056} & \underline{47.77} & \textbf{11586} \\
\addlinespace[2pt]


\bottomrule
\end{tabular}%
} 
\end{table*}


\vspace{-2mm}
\subsection{Accuracy vs Compute Budget}
\label{sec:acc_budget}
\vspace{-1mm}

To understand how our methods scale with token budget, we vary the number of parallel trajectories $N \in \{4, 8, 16, 24, 32, 48, 64\}$ and plot accuracy versus average number of generated tokens on MATH500 using Qwen2.5-7B-Instruct (Fig. \ref{fig:acc_vs_tokens_across_N}). This visualization reveals the accuracy-token Pareto front: better methods lie higher and further left, achieving higher accuracy with fewer tokens.

\begin{figure}[t]
    \centering

    \begin{minipage}[t]{0.48\linewidth}
        \centering
        \includegraphics[width=\linewidth]{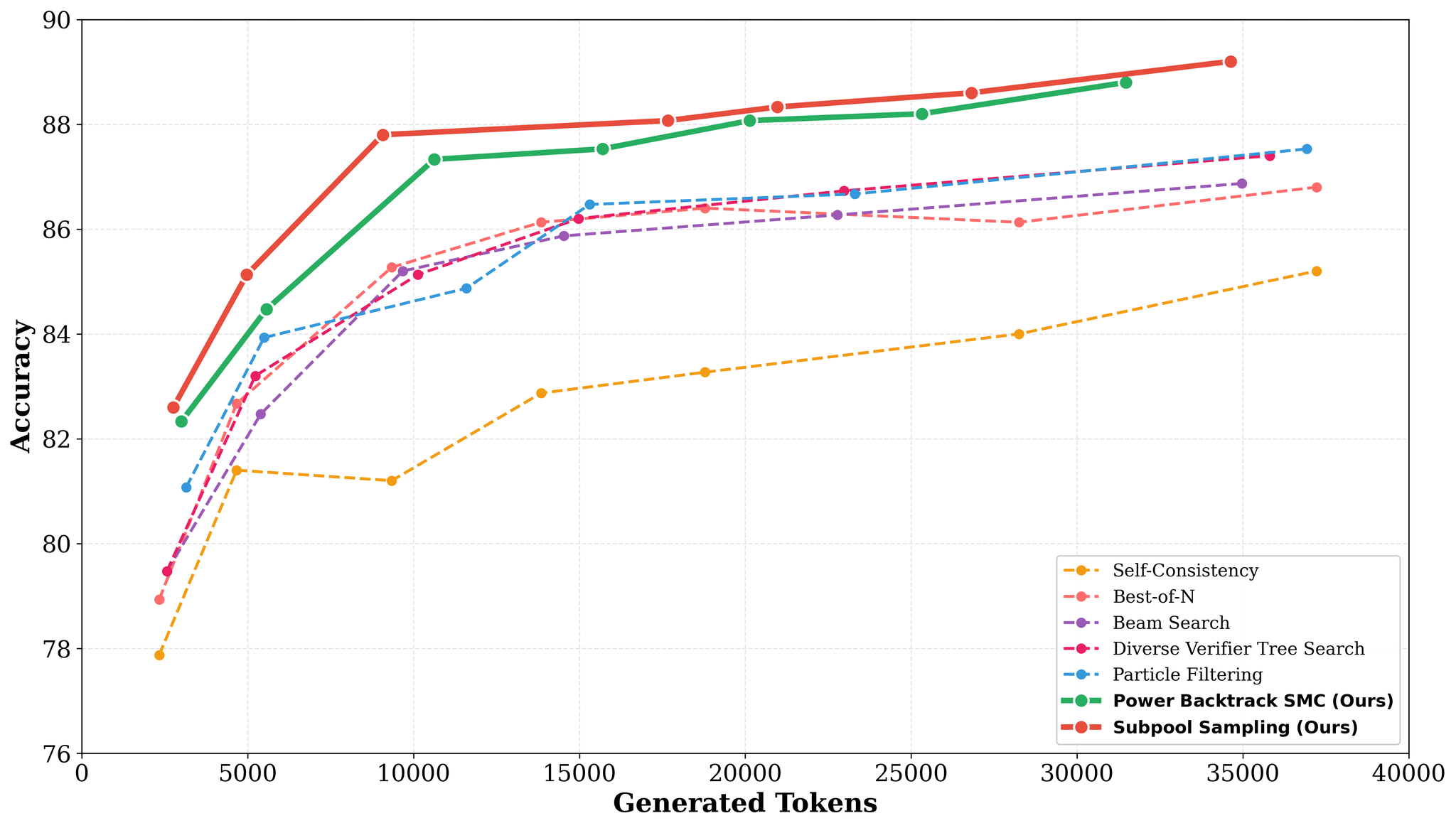}\\[-1mm]
        {\small (a) Accuracy vs token budget.}
    \end{minipage}
    \hfill
    \begin{minipage}[t]{0.48\linewidth}
        \centering
        \includegraphics[width=\linewidth]{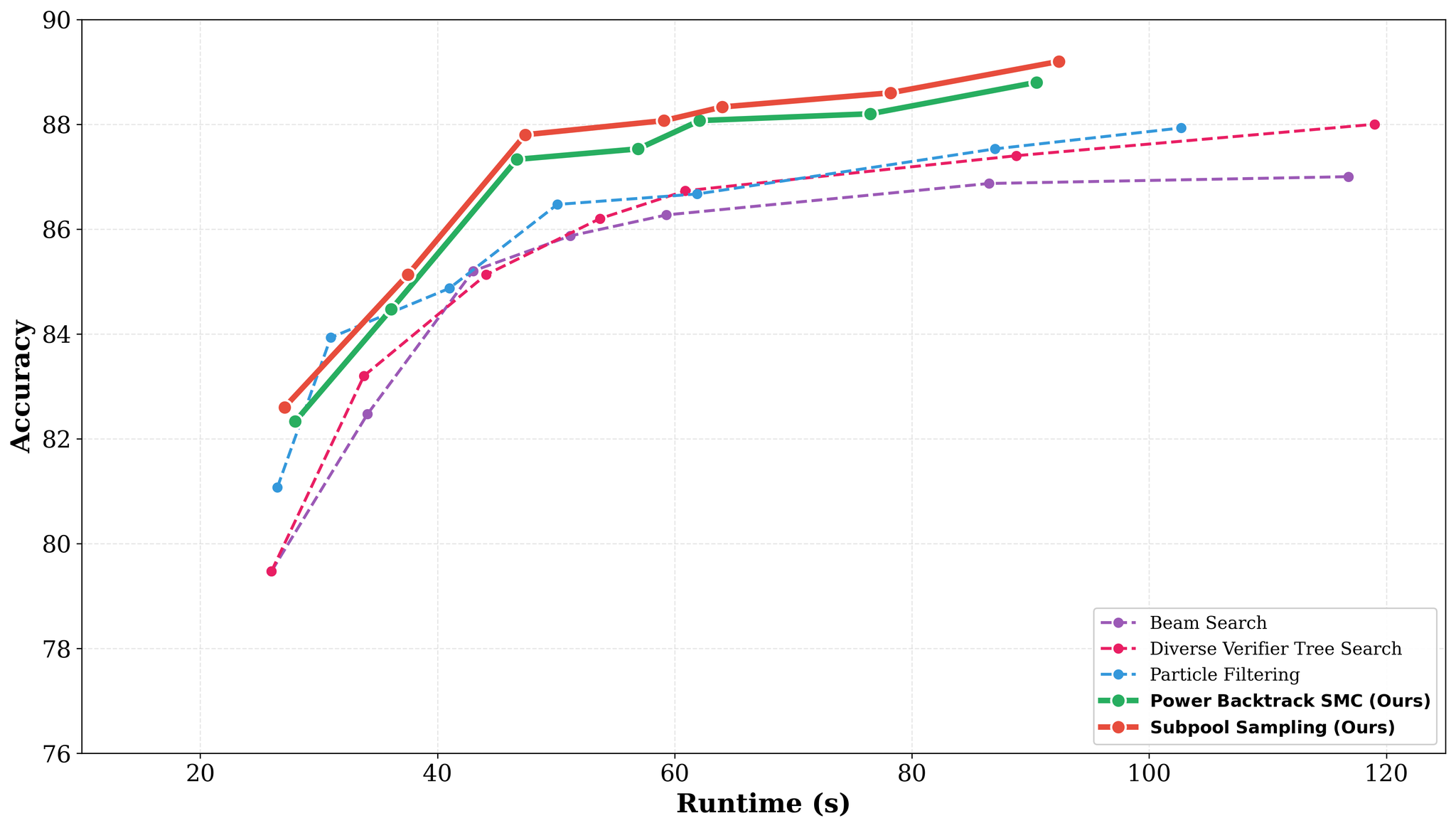}\\[-1mm]
        {\small (b) Accuracy vs runtime.}
    \end{minipage}

    \caption{
    \textbf{Accuracy--compute trade-off on MATH500 using Qwen2.5-7B-Instruct.}
    Each curve varies the number of parallel trajectories from $N=4$ to $N=64$.
    Panel (a) reports accuracy versus generated tokens, while panel (b) reports accuracy versus wall-clock runtime. PB-SMC and SPS improve the Pareto front under both compute measures, achieving higher accuracy at a fixed budget or requiring less compute for a target accuracy.
    }
    \label{fig:acc_vs_tokens_across_N}
    \vspace{-4mm}
\end{figure}

\vspace{-2mm}

\paragraph{Backtracking methods improve the accuracy--token trade-off on MATH500.}
Figure~\ref{fig:acc_vs_tokens_across_N}(a) shows that PB-SMC and SPS consistently achieve significantly more favorable accuracy--token trade-off than the baselines on MATH500. Across the evaluated compute range, these methods often obtain higher accuracy under comparable token budgets, or reach similar accuracy with fewer generated tokens. The gains are most visible in the moderate-compute regime (10K--25K tokens), where improving token efficiency can be particularly useful for practical deployment.


\vspace{-2mm}

\paragraph{Backtracking enables efficient compute allocation.}
Figure~\ref{fig:acc_vs_tokens_across_N}(b) shows that the benefits of PB-SMC and SPS persist under wall-clock runtime, not only under generated-token counts. As discussed in Section~4.1, backtracking may require recomputing KV caches for historical prefixes, introducing additional overhead relative to frontier-only methods; we analyze this trade-off in more detail in Appendix~\ref{app:runtime}. Nevertheless, both methods remain competitive in runtime and the gain becomes more significant in the higher time/accuracy region, indicating that their improved selection of prefixes outweighs the cost of occasional recomputation. Thus, persistent-pool backtracking improves practical compute allocation with an acceptable compute overhead.


\subsection{Effect of Backtracking}
\label{sec:component_ablation}
\vspace{-1mm}

Table~\ref{tab:ablation} compares frontier-only variants with their historical-pool counterparts across four mathematical reasoning benchmarks.  We organize the comparison into three pairs: SMC versus Backtrack SMC, Power SMC versus Power Backtrack SMC, and Greedy Selection versus SPS.  Here, Greedy Selection is the deterministic historical-pool ranking rule in \Cref{subsubsec:persistent-TopM-blocker}: it keeps the persistent pool but selects parents by global ranking. Power SMC is the frontier-only counterpart of Power Backtrack SMC in \Cref{subsec:power-backtrack-smc}: it uses the same powered PRM weighting but sets $\Pool_t=\New_t$, without retaining historical prefixes. Thus the first two pairs isolate adding backtracking to SMC-style sampling, while the last pair isolates replacing global with stochastic subpool selection.

\begin{table*}[t]
\vspace{-1mm}
\centering
\scriptsize
\caption{%
    \textbf{Ablation of historical backtracking and stochastic persistent-pool access with Qwen2.5-7B-Instruct.}
    We report accuracy (Acc, \%, $\uparrow$ better), and average token usage (\#Tokens, $\downarrow$ better).
    \textbf{Bold} indicates best accuracy; \underline{underline} indicates runner-up.
}
\label{tab:ablation}
\resizebox{\linewidth}{!}{%
\begin{tabular}{l c cc cc cc cc cc}
\toprule
& & \multicolumn{2}{c}{\textbf{Minerva}}
& \multicolumn{2}{c}{\textbf{MATH500}}
& \multicolumn{2}{c}{\textbf{AMC23}}
& \multicolumn{2}{c}{\textbf{AIME24}}
& \multicolumn{2}{c}{\textbf{Average}} \\
\cmidrule(lr){3-4}\cmidrule(lr){5-6}\cmidrule(lr){7-8}\cmidrule(lr){9-10}\cmidrule(lr){11-12}
\textbf{Method} & \textbf{N} &
\textbf{Acc\,$\uparrow$} & \textbf{\#Tokens\,$\downarrow$} &
\textbf{Acc\,$\uparrow$} & \textbf{\#Tokens\,$\downarrow$} &
\textbf{Acc\,$\uparrow$} & \textbf{\#Tokens\,$\downarrow$} &
\textbf{Acc\,$\uparrow$} & \textbf{\#Tokens\,$\downarrow$} &
\textbf{Acc\,$\uparrow$} & \textbf{\#Tokens\,$\downarrow$} \\
\midrule

SMC
& 16
& \sd{38.73}{0.56} & 10841
& \sd{84.20}{0.40} & 10228
& \sd{59.50}{5.70} & 14740
& \sd{14.00}{3.67} & 18725
& 49.11 & 13634 \\
\addlinespace[2pt]

Backtrack SMC
& 8
& \underline{\sd{42.89}{0.76}} & 13765
& \sd{85.00}{0.60} & 13332
& \sd{62.50}{5.59} & 19238
& \sd{18.67}{1.83} & 24228
& 52.27 & 17641 \\
\addlinespace[3pt]
\cmidrule{1-12}
\addlinespace[1pt]

Power SMC
& 16
& \sd{39.22}{0.21} & 13274
& \sd{86.00}{0.20} & 11911
& \sd{63.00}{4.11} & 19260
& \sd{16.00}{2.80} & 24988
& 51.06 & 17358 \\
\addlinespace[2pt]

Power Backtrack SMC
& 8
& \sd{41.79}{0.56} & 13216
& \underline{\sd{87.47}{0.31}} & 10628
& \underline{\sd{66.50}{2.85}} & 17637
& \underline{\sd{19.33}{1.50}} & 27581
& \underline{53.77} & 17266 \\
\addlinespace[3pt]
\cmidrule{1-12}
\addlinespace[1pt]

Greedy Selection
& 8
& \sd{40.07}{0.37} & 9972
& \sd{87.20}{0.20} & 9031
& \sd{62.50}{4.68} & 20953
& \sd{16.67}{4.07} & 34033
& 51.61 & 18497 \\
\addlinespace[2pt]

\textbf{SPS}
& \textbf{8}
& \textbf{\sd{43.26}{0.77}} & 9387
& \textbf{\sd{87.87}{0.70}} & 9076
& \textbf{\sd{67.50}{2.50}} & 22446
& \textbf{\sd{20.00}{2.37}} & 24347
& \textbf{54.66} & 16314 \\
\addlinespace[2pt]

\bottomrule
\end{tabular}%
} 
\vspace{-3mm}
\end{table*}


Backtracking consistently improves SMC-style search. Backtrack SMC outperforms standard SMC on all four benchmarks, improving average accuracy from $49.11\%$ to $52.27\%$. The same pattern holds under powered weighting: Power Backtrack SMC improves over Power SMC on every benchmark and increases average accuracy from $51.06\%$ to $53.77\%$. These results indicate that retaining historical prefixes provides consistent gains over frontier-only resampling. Powering the PRM weights further improves SMC-style methods, suggesting that sharper reward-weighted resampling is complementary to historical backtracking.

Backtracking via Subpooling in SPS improves over Greedy Selection on all four benchmarks, increasing average accuracy from $51.61\%$ to $54.66\%$. It also achieves the highest accuracy on every benchmark, with especially large gains over standard SMC on AMC23 and AIME24. Overall, these results show that both historical backtracking and stochastic persistent-pool access contribute to the strong performance of SPS.

\vspace{-1mm}
\section{Conclusion}
\vspace{-1mm}
We identified irreversible frontier-only selection as a potential failure mode in PRM-guided test-time scaling: noisy intermediate scores can permanently discard prefixes that still admit correct continuations. We then introduced persistent-pool stochastic backtracking, a linear-memory alternative that keeps historical prefixes eligible for future expansion without storing the full search tree. We instantiated this framework with Subpool Selection for greedy PRM-guided search and Power Backtrack SMC 
with mixture-corrected historical resampling
for sampling-based search. Across mathematical reasoning benchmarks, our methods improve the accuracy--token trade-off over strong frontier-only baselines. One limitation is that our methods can incur additional runtime overhead when backtracking requires recomputing KV caches for historical prefixes, however empirically we show that this overhead is small comparing to the performance gain elsewhere. Our results suggest that effective test-time scaling should optimize not only how prefixes are scored, but also which prefixes remain available and should be considered for future compute.

\newpage
\bibliographystyle{abbrv}
\bibliography{main}


\appendix
\newpage

\section{Algorithms for Persistent-Pool Search}
\label{app:persistent-pool-algorithms}

The main text keeps only the master template; the method-specific pseudocode is collected here. 
All three algorithms use the notation of Algorithm~\ref{alg:master-particle-search}.  The child budget is $N$, the parent budget is $M$, and each selected parent generates $B=N/M$ children.

\begin{algorithm}[H]
\caption{Greedy Selection}
\label{alg:persistent-TopM}
\begin{algorithmic}[1]
\Require prompt $x_0$, child budget $N$, parent budget $M$, horizon $T$, PRM score $R$.
\State Initialize $\Pool_0$ with $N$ one-step samples from $p_{\mathrm{LLM}}(\cdot\mid x_0)$ and set $W_0(z)\gets r_{\mathrm{PRM}}(z)$ for all $z\in\Pool_0$.
\For{$t=1,\ldots,T$}
    \State $\Cand_t\gets\mathrm{Top}(\mathcal{P}_{t-1}, W_{t-1},M).$
    \State Generate $\New_t\gets\bigcup_{c\in\Cand_t}\textsc{LLM-Expand}(c,N/M)$ and score each $z\in\New_t$.
    \State $\Pool_t\gets\Pool_{t-1}\uplus\New_t$.
    \State Set $W_t(z)\gets r_{\mathrm{PRM}}(z)$ for all $z\in\Pool_t$.
\EndFor
\State \Return final candidates from $\TopPool{\Pool_T}{W_T}{M}$.
\end{algorithmic}
\end{algorithm}

\begin{algorithm}[H]
\caption{Subpool Selection}
\label{alg:sps}
\begin{algorithmic}[1]
\Require prompt $x_0$, child budget $N$, parent budget $M$, horizon $T$, PRM score $R$, subpool ratios $\rho_t\in(0,1]$.
\State Initialize $\Pool_0$ with $N$ one-step samples from $p_{\mathrm{LLM}}(\cdot\mid x_0)$ and set $W_0(z)\gets r_{\mathrm{PRM}}(z)$ for all $z\in\Pool_0$.
\For{$t=1,\ldots,T$}
    \State $K_t\gets\max\{M,\lfloor\rho_t|\Pool_{t-1}|\rfloor\}$.
    \State Sample $\Subpool_t\subseteq\Pool_{t-1}$ uniformly without replacement with $|\Subpool_t|=K_t$.
    \State $\Cand_t\gets\mathrm{Top}(\Subpool_{t}, W_{t-1},M)$.
    \State Generate $\New_t\gets\bigcup_{c\in\Cand_t}\textsc{LLM-Expand}(c,N/M)$ and score each $z\in\New_t$.
    \State $\Pool_t\gets\Pool_{t-1}\uplus\New_t$.
    \State Set $W_t(z)\gets r_{\mathrm{PRM}}(z)$ for all $z\in\Pool_t$.
\EndFor
\State \Return final candidates from $\TopPool{\Pool_T}{W_T}{M}$.
\end{algorithmic}
\end{algorithm}

\begin{algorithm}[H]
\caption{Power Backtrack SMC}
\label{alg:power-backtrack-smc}
\begin{algorithmic}[1]
\Require prompt $x_0$, child budget $N$, horizon $T$, PRM score $R$, powers $\beta_t$, mixture probabilities $\alpha_t$.
\State Initialize $\Pool_0$ with $N$ one-step samples from $p_{\mathrm{LLM}}(\cdot\mid x_0)$ and set $W_0(z)\gets r_{\mathrm{PRM}}(z)$ for all $z\in\Pool_0$.
\For{$t=1,\ldots,T$}
    \State Sample $\Cand_t$ as $N$ parents with replacement from $\Pool_{t-1}$ using normalized weights induced by $W_{t-1}$.
    \State Generate $\New_t\gets\bigcup_{c\in\Cand_t}\textsc{LLM-Expand}(c,1)$ and score each $z\in\New_t$.
    \State Sample a retained multiset $S_t$ of size $Nt$ with replacement from $\Pool_{t-1}$ using normalized weights induced by $W_{t-1}$.
    \State Set $\Pool_t\gets S_t\uplus\New_t$.
    \For{each $z\in\Pool_t$}
        \State Compute $F_t(z)$ using \eqref{eq:power-backtrack-factor-main}.
        \If{$z\in\New_t$}
            \State $W_t(z)\gets\alpha_tF_t(z)$.
        \Else
            \State $W_t(z)\gets(1-\alpha_t)\dfrac{F_t(z)}{t}$.
        \EndIf
    \EndFor
\EndFor
\State \Return final candidates sampled from $\Pool_T$ using normalized $W_T$.
\end{algorithmic}
\end{algorithm}

\section{Mixture-proposal multiple importance sampling and Power Backtrack SMC}
\label{app:pbs-proof}

In this part, we will derive the PB-SMC weighting rule and proves proxy-target consistency. \newline
All distributions are defined on the fixed state space $\Xspace_{\le T+1}:=\bigcup_{s=1}^{T+1}\Xspace_s$.  Since PB-SMC initializes $\Pool_0$ with one-step prefixes, the round-$t$ target has effective support $\Xspace_{\le t+1}$ through the indicator $\ind\{\length(z)\le t+1\}$.

\subsection{A mixture-proposal multiple importance sampling identity}
\label{app:mis-identity}

\begin{lemma}[Mixture-proposal multiple importance sampling identity]
\label{lem:mis-identity}
Let $P$ be a normalized target distribution on a common measurable state space.  Let $q_1,\ldots,q_m$ be normalized proposal distributions, let $\alpha_i\ge0$ with $\sum_{i=1}^m\alpha_i=1$, and define $q_{\mathrm{mix}}(x):=\sum_{i=1}^m\alpha_iq_i(x)$.  If $P\ll q_{\mathrm{mix}}$, then for every bounded measurable $f$,
\begin{equation}
    \bbE_P[f(X)]
    =
    \sum_{i=1}^m
    \bbE_{q_i}\left[
        f(X)\frac{\alpha_iP(X)}{q_{\mathrm{mix}}(X)}
    \right].
    \label{eq:mis-normalized-app}
\end{equation}
\end{lemma}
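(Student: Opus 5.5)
The plan is a direct change-of-measure computation. Since $P\ll q_{\mathrm{mix}}$, the Radon--Nikodym derivative $P/q_{\mathrm{mix}}$ (written as a density ratio, as in the statement) is well defined $q_{\mathrm{mix}}$-almost everywhere. This gives
\[
\bbE_P[f(X)]=\bbE_{q_{\mathrm{mix}}}\!\left[f(X)\frac{P(X)}{q_{\mathrm{mix}}(X)}\right].
\]
The left side is finite because $f$ is bounded and $P$ is a probability measure. Two conventions are needed for the ratio. On the set $\{q_{\mathrm{mix}}=0\}$ we set it to $0$; this costs nothing, because the set is $P$-null by absolute continuity and $q_i$-null for every $i$ with $\alpha_i>0$.

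The second step expands $q_{\mathrm{mix}}=\sum_i\alpha_i q_i$ inside the expectation. Written against a common dominating measure $\mu$ (for instance $\mu=\sum_i q_i+P$), the right side becomes
\[
\int f\,\frac{P}{q_{\mathrm{mix}}}\sum_i\alpha_i q_i\,d\mu=\sum_i\int f\,\frac{\alpha_iP}{q_{\mathrm{mix}}}\,q_i\,d\mu .
\]
This is exactly the right-hand side of \eqref{eq:mis-normalized-app}. In the finite prefix space $\Xspace_{\le T+1}$, where the lemma is actually used, these integrals are just finite sums.

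The only real obstacle is justifying the interchange of the finite sum with the integral. I will handle it by integrability. Each integrand satisfies
\[
\left|f\,\frac{\alpha_iP}{q_{\mathrm{mix}}}\,q_i\right|\le \|f\|_\infty\,P,
\]
because $\alpha_iq_i\le q_{\mathrm{mix}}$ pointwise. Hence each term is absolutely integrable with integral at most $\|f\|_\infty$, and linearity of the integral over finitely many integrable terms gives the identity. The same pointwise bound also shows that indices with $\alpha_i=0$ contribute zero, so the identity holds for them as well.
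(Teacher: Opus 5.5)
Your proof is correct and is essentially the paper's argument. The paper inserts the partition of unity $\sum_i \alpha_i q_i/q_{\mathrm{mix}}=1$ directly into $\int fP$, while you first pass to $\bbE_{q_{\mathrm{mix}}}[f\,P/q_{\mathrm{mix}}]$ and then expand $q_{\mathrm{mix}}$, which is the same one-line computation. Your handling of the set $\{q_{\mathrm{mix}}=0\}$ (which is $P$-null only because $P\ll q_{\mathrm{mix}}$) and your integrability bound via $\alpha_i q_i\le q_{\mathrm{mix}}$ make explicit what the paper's proof leaves implicit.
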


\begin{proof}
Define $b_i(x):=\alpha_iq_i(x)/q_{\mathrm{mix}}(x)$ whenever $q_{\mathrm{mix}}(x)>0$.  Then $\sum_i b_i(x)=1$, and
\begin{align}
    \bbE_P[f(X)]
    &=\int f(x)P(x)\sum_{i=1}^m b_i(x)\dd x \\
    &=\sum_{i=1}^m\int f(x)P(x)\frac{\alpha_iq_i(x)}{q_{\mathrm{mix}}(x)}\dd x \\
    &=\sum_{i=1}^m
    \bbE_{q_i}\left[
        f(X)\frac{\alpha_iP(X)}{q_{\mathrm{mix}}(X)}
    \right].
\end{align}
\end{proof}

\begin{corollary}[Self-normalized unnormalized form]
\label{cor:mis-self-normalized}
Suppose $P(x)=\widetilde P(x)/Z_P$ and $q_{\mathrm{mix}}(x)=\widetilde q_{\mathrm{mix}}(x)/Z_q$, where $Z_P$ and $Z_q$ may be unknown.  Then
\begin{equation}
\bbE_P[f(X)]
=
\frac{
\sum_{i=1}^m
\bbE_{q_i}\!\left[
  f(X)\alpha_i
  \frac{\widetilde P(X)}{\widetilde q_{\mathrm{mix}}(X)}
\right]
}{
\sum_{i=1}^m
\bbE_{q_i}\!\left[
  \alpha_i
  \frac{\widetilde P(X)}{\widetilde q_{\mathrm{mix}}(X)}
\right]
}.
\label{eq:mis-self-normalized-app}
\end{equation}
\end{corollary}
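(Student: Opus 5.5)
The plan is to derive the corollary directly from Lemma~\ref{lem:mis-identity}, applied twice: once to the given bounded function $f$ and once to the constant function $1$. The unknown normalizing constants should then appear as a single common factor in both resulting identities and cancel in their ratio. Throughout I keep the standing hypothesis $P\ll q_{\mathrm{mix}}$, so that the ratio $P/q_{\mathrm{mix}}$, and hence $\widetilde P/\widetilde q_{\mathrm{mix}}$, is well defined $q_{\mathrm{mix}}$-almost everywhere. Because $q_i\le q_{\mathrm{mix}}/\alpha_i$ whenever $\alpha_i>0$, the same holds $q_i$-almost everywhere for every $i$ that actually contributes. Terms with $\alpha_i=0$ vanish on both sides and can be discarded.

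\textbf{Step 1: rewrite the weight.} Substituting $P=\widetilde P/Z_P$ and $q_{\mathrm{mix}}=\widetilde q_{\mathrm{mix}}/Z_q$ gives, pointwise,
\[
\frac{\alpha_iP(x)}{q_{\mathrm{mix}}(x)}=\frac{Z_q}{Z_P}\,\alpha_i\frac{\widetilde P(x)}{\widetilde q_{\mathrm{mix}}(x)}.
\]

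\textbf{Step 2: apply the lemma to $f$.} Plugging this into \eqref{eq:mis-normalized-app} yields
\[
\bbE_P[f(X)]=\frac{Z_q}{Z_P}\sum_{i=1}^m\bbE_{q_i}\!\left[f(X)\,\alpha_i\frac{\widetilde P(X)}{\widetilde q_{\mathrm{mix}}(X)}\right].
\]

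\textbf{Step 3: apply the lemma to $f\equiv1$.} Since $P$ is normalized, $\bbE_P[1]=1$, so
\[
1=\frac{Z_q}{Z_P}\sum_{i=1}^m\bbE_{q_i}\!\left[\alpha_i\frac{\widetilde P(X)}{\widetilde q_{\mathrm{mix}}(X)}\right].
\]
This shows the denominator sum equals $Z_P/Z_q$. In particular it is strictly positive and finite, because $Z_P,Z_q\in(0,\infty)$ are the normalizers of genuine distributions.

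\textbf{Step 4: take the ratio.} Dividing the identity of Step~2 by that of Step~3 cancels the common factor $Z_q/Z_P$ and gives \eqref{eq:mis-self-normalized-app} exactly.

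The only point needing care is that every expectation above is finite and that the division in Step~4 is legitimate. For this I would note the following.
\begin{itemize}
\item The integrand in each summand is $f\,\alpha_iP/q_{\mathrm{mix}}$ up to the constant $Z_P/Z_q$.
\item Its $q_i$-expectation is bounded in absolute value by $\|f\|_\infty\int P\,\alpha_iq_i/q_{\mathrm{mix}}\le\|f\|_\infty$, since $\alpha_iq_i/q_{\mathrm{mix}}\le1$.
\item So the numerator is finite, and the denominator's value $Z_P/Z_q$ is positive and finite by Step~3.
\end{itemize}
This integrability bookkeeping is the main (mild) obstacle; there is no real difficulty beyond it.
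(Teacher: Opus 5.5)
Your proposal is correct and follows the paper's proof: substitute $P=\widetilde P/Z_P$ and $q_{\mathrm{mix}}=\widetilde q_{\mathrm{mix}}/Z_q$ into Lemma~\ref{lem:mis-identity}, then cancel the common factor $Z_q/Z_P$ in the ratio. Two of your steps make explicit what the paper's one-line argument leaves implicit: applying the lemma to $f\equiv 1$ identifies the denominator as $Z_P/Z_q$, so it is positive and finite, and the bound $\alpha_i q_i\le q_{\mathrm{mix}}$ gives the integrability.
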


\begin{proof}
Substitute $P=\widetilde P/Z_P$ and $q_{\mathrm{mix}}=\widetilde q_{\mathrm{mix}}/Z_q$ into Lemma~\ref{lem:mis-identity}.  Both numerator and denominator contain the same multiplicative factor $Z_q/Z_P$, which cancels in the self-normalized ratio.
\end{proof}

\subsection{Shared proposal normalizer for PB-SMC}
\label{app:pbsmc-shared-normalizer}

In this part, we will prove that  the generated-child and retained-history proposals have the same normalizer. \newline
At round $t\ge1$, the previous powered target is
\[
    \widetilde\pi_{t-1}^{(\beta_{t-1})}(z)
    =
    p_{\mathrm{LLM}}(z)r_{\mathrm{PRM}}(z)^{\beta_{t-1}}\ind\{\length(z)\le t\},
\]
with normalizing constant
\[
    Z_{t-1}
    :=
    \sum_{z\in\Xspace_{\le T+1}}
    \widetilde\pi_{t-1}^{(\beta_{t-1})}(z)
    =
    \sum_{u\in\Xspace_{\le t}}p_{\mathrm{LLM}}(u)r_{\mathrm{PRM}}(u)^{\beta_{t-1}}.
\]
The retained-history proposal redraws a prefix from the previous powered target, so its unnormalized density is
\begin{equation}
    \widetilde q_t^{\mathrm{hist}}(z)
    =
    p_{\mathrm{LLM}}(z)r_{\mathrm{PRM}}(z)^{\beta_{t-1}}\ind\{\length(z)\le t\},
    \qquad
    \sum_z \widetilde q_t^{\mathrm{hist}}(z)=Z_{t-1}.
    \label{eq:qhist-unnormalized-app}
\end{equation}

The one-step prefixes are generated during initialization.  Hence, for rounds $t\ge1$, the retained-history prefixes have length of $1,\ldots,t$, thus the generated-child prefixes have length of $2,\ldots,t+1$.  For the generated-child proposal, write $z=x_{1:d}$ with $2\le d\le t+1$ and parent $u=x_{1:d-1}$.  Sampling $u$ from the previous powered target and then sampling $x_d$ from the LLM gives
\begin{align}
    \widetilde q_t^{\mathrm{new}}(x_{1:d})
    &=
    p_{\mathrm{LLM}}(x_{1:d-1})r_{\mathrm{PRM}}(x_{1:d-1})^{\beta_{t-1}}
    p_{\mathrm{LLM}}(x_d\mid x_{1:d-1})
    \\
    &=
    p_{\mathrm{LLM}}(x_{1:d})r_{\mathrm{PRM}}(x_{1:d-1})^{\beta_{t-1}}.
    \label{eq:qnew-prefix-app}
\end{align}
Equivalently, with $\pa(z)$ denoting the structural parent of $z$,
\begin{equation}
    \widetilde q_t^{\mathrm{new}}(z)
    =
    p_{\mathrm{LLM}}(z)r_{\mathrm{PRM}}(\pa(z))^{\beta_{t-1}}\ind\{2\le\length(z)\le t+1\}.
    \label{eq:qnew-unnormalized-app}
\end{equation}
Because the LLM extension kernel is normalized,
\begin{align}
\sum_{z\in\Xspace_{\le T+1}}\widetilde q_t^{\mathrm{new}}(z)
&=
\sum_{d=2}^{t+1}
\sum_{x_{1:d-1}\in\Xspace_{d-1}}
p_{\mathrm{LLM}}(x_{1:d-1})r_{\mathrm{PRM}}(x_{1:d-1})^{\beta_{t-1}}
\sum_{x_d}p_{\mathrm{LLM}}(x_d\mid x_{1:d-1}) \\
&=
\sum_{d=2}^{t+1}
\sum_{x_{1:d-1}\in\Xspace_{d-1}}
p_{\mathrm{LLM}}(x_{1:d-1})r_{\mathrm{PRM}}(x_{1:d-1})^{\beta_{t-1}} \\
&=
\sum_{u\in\Xspace_{\le t}}p_{\mathrm{LLM}}(u)r_{\mathrm{PRM}}(u)^{\beta_{t-1}}
=
Z_{t-1}.
\end{align}
Thus $\widetilde q_t^{\mathrm{new}}$ and $\widetilde q_t^{\mathrm{hist}}$ share the same normalizing constant.  Consequently,
\begin{equation}
    \widetilde q_t^{\mathrm{mix}}(z)
    =
    \alpha_t\widetilde q_t^{\mathrm{new}}(z)
    +(1-\alpha_t)\widetilde q_t^{\mathrm{hist}}(z)
    \label{eq:pbsmc-unnormalized-mixture-app}
\end{equation}
also has normalizer $Z_{t-1}$.

\subsection{Explicit Power Backtrack SMC weight}
\label{app:pbs-explicit-weight}
In this part, we will derive the weights for Power Backtrack SMC method.
\begin{lemma}[Explicit PB-SMC correction factor]
\label{lem:pbs-explicit-weight}
Let $z=x_{1:d}\in\Xspace_{\le t+1}$ with $d\ge1$.  Let $\pa(z)=x_{1:d-1}$ for $d\ge2$, let $\pa(z)=x_0$ for $d=1$, and set $R(x_0)=1$.  Under the mixture proposal in \eqref{eq:pbsmc-unnormalized-mixture-app},
\[
    F_t(z)
    :=
    \frac{\widetilde\pi_t^{(\beta_t)}(z)}
    {\widetilde q_t^{\mathrm{mix}}(z)}
\]
equals
\begin{equation}
    F_t(z)
    =
    \frac{
    \left(\frac{r_{\mathrm{PRM}}(z)}{r_{\mathrm{PRM}}(\pa(z))}\right)^{\beta_{t-1}}
    r_{\mathrm{PRM}}(z)^{\beta_t-\beta_{t-1}}
    }
    {
    \alpha_t\ind\{d\ge2\}
    +(1-\alpha_t)
    \left(\frac{r_{\mathrm{PRM}}(z)}{r_{\mathrm{PRM}}(\pa(z))}\right)^{\beta_{t-1}}
    \ind\{d\le t\}
    }.
    \label{eq:power-backtrack-factor-app}
\end{equation}
\end{lemma}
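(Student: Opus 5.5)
The plan is a direct computation: substitute the explicit unnormalized densities into the ratio, then divide numerator and denominator by a common factor. By \eqref{eq:powered-historical-target-main} the numerator is
\[
\widetilde\pi_t^{(\beta_t)}(z)=p_{\mathrm{LLM}}(z)\,r_{\mathrm{PRM}}(z)^{\beta_t}\,\ind\{d\le t+1\}.
\]
Since $z\in\Xspace_{\le t+1}$ by assumption, the indicator equals one. By \eqref{eq:pbsmc-unnormalized-mixture-app}, combined with \eqref{eq:qnew-unnormalized-app} and \eqref{eq:qhist-unnormalized-app}, the denominator is
\[
\widetilde q_t^{\mathrm{mix}}(z)=p_{\mathrm{LLM}}(z)\Big[\alpha_t\, r_{\mathrm{PRM}}(\pa(z))^{\beta_{t-1}}\ind\{2\le d\le t+1\}+(1-\alpha_t)\, r_{\mathrm{PRM}}(z)^{\beta_{t-1}}\ind\{d\le t\}\Big].
\]
Again the upper constraint $d\le t+1$ in the first indicator is automatic, so that indicator reduces to $\ind\{d\ge 2\}$.

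The key step is to cancel $p_{\mathrm{LLM}}(z)$ and then divide both numerator and denominator by $r_{\mathrm{PRM}}(\pa(z))^{\beta_{t-1}}$. This factor is strictly positive, because $r_{\mathrm{PRM}}\ge R_{\min}>0$ for $d\ge2$ and $r_{\mathrm{PRM}}(x_0)=1$ by convention for $d=1$. The denominator then becomes
\[
\alpha_t\ind\{d\ge2\}+(1-\alpha_t)\Big(\tfrac{r_{\mathrm{PRM}}(z)}{r_{\mathrm{PRM}}(\pa(z))}\Big)^{\beta_{t-1}}\ind\{d\le t\}.
\]
For the numerator I would split $r_{\mathrm{PRM}}(z)^{\beta_t}=r_{\mathrm{PRM}}(z)^{\beta_{t-1}}\,r_{\mathrm{PRM}}(z)^{\beta_t-\beta_{t-1}}$, so that it becomes
\[
\Big(\tfrac{r_{\mathrm{PRM}}(z)}{r_{\mathrm{PRM}}(\pa(z))}\Big)^{\beta_{t-1}} r_{\mathrm{PRM}}(z)^{\beta_t-\beta_{t-1}}.
\]
This matches \eqref{eq:power-backtrack-factor-app} exactly.

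The only real obstacle is checking that the ratio is well defined, that is, that the denominator never vanishes on $\Xspace_{\le t+1}$. I would handle this by cases on $d$. If $d=1$, then $d\le t$ holds since $t\ge1$, so the history term is positive. If $2\le d\le t$, both terms are positive. If $d=t+1$, only the new-child term survives, and it is positive because $\alpha_t>0$. In the boundary case $d=1$ the convention $r_{\mathrm{PRM}}(x_0)=1$ makes the formula read $r_{\mathrm{PRM}}(z)^{\beta_t}/\big((1-\alpha_t)r_{\mathrm{PRM}}(z)^{\beta_{t-1}}\big)$, which is consistent with the direct computation, since $\widetilde q^{\mathrm{new}}_t$ vanishes on one-step prefixes.
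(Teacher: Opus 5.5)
Your proposal is correct and matches the paper's proof: substitute the unnormalized target and the two proposal densities, factor $p_{\mathrm{LLM}}(z)\,r_{\mathrm{PRM}}(\pa(z))^{\beta_{t-1}}$ out of $\widetilde q_t^{\mathrm{mix}}(z)$, and divide, using $d\le t+1$ to simplify the indicators. Your case check that the bracketed denominator is positive on $\Xspace_{\le t+1}$ is a small addition that the paper leaves implicit; note that both arguments also tacitly assume $p_{\mathrm{LLM}}(z)>0$ when cancelling that factor.
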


\begin{proof}
The target numerator is
\[
    \widetilde\pi_t^{(\beta_t)}(z)
    =
    p_{\mathrm{LLM}}(z)r_{\mathrm{PRM}}(z)^{\beta_t}\ind\{d\le t+1\}.
\]
By \eqref{eq:qnew-unnormalized-app} and \eqref{eq:qhist-unnormalized-app},
\[
    \widetilde q_t^{\mathrm{new}}(z)
    =
    p_{\mathrm{LLM}}(z)r_{\mathrm{PRM}}(\pa(z))^{\beta_{t-1}}\ind\{d\ge2\},
    \qquad
    \widetilde q_t^{\mathrm{hist}}(z)
    =
    p_{\mathrm{LLM}}(z)r_{\mathrm{PRM}}(z)^{\beta_{t-1}}\ind\{d\le t\}.
\]
Therefore
\begin{align}
    \widetilde q_t^{\mathrm{mix}}(z)
    &=
    p_{\mathrm{LLM}}(z)r_{\mathrm{PRM}}(\pa(z))^{\beta_{t-1}}
    \left[
    \alpha_t\ind\{d\ge2\}
    +(1-\alpha_t)
    \left(\frac{r_{\mathrm{PRM}}(z)}{r_{\mathrm{PRM}}(\pa(z))}\right)^{\beta_{t-1}}
    \ind\{d\le t\}
    \right].
\end{align}
Dividing $\widetilde\pi_t^{\beta_t}(z)$ by this expression and using $d\le t+1$ for the round-$t$ target gives \eqref{eq:power-backtrack-factor-app}.  This is the same expression as \eqref{eq:power-backtrack-factor-main}.
\end{proof}

\begin{lemma}[Fixed-count mixture-proposal estimator]
\label{lem:fixed-count-estimator}
Let $\New_t$ contain $N$ samples from $q_t^{\mathrm{new}}$ and let $S_t$ contain $Nt$ samples from $q_t^{\mathrm{hist}}$.  The self-normalized mixture-proposal estimator of $\pi_t^{\beta_t}(f)$ uses unnormalized weights
\begin{equation}
    W_t(z)=
    \begin{cases}
    \alpha_tF_t(z), & z\in\New_t,\\[1mm]
    (1-\alpha_t)\dfrac{F_t(z)}{t}, & z\in S_t.
    \end{cases}
    \label{eq:fixed-count-weights-app}
\end{equation}
\end{lemma}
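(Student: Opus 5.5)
The plan is to read the weights off Corollary~\ref{cor:mis-self-normalized}, applied with $m=2$ components. Take the target $\widetilde P=\widetilde\pi_t^{(\beta_t)}$. Take the proposals $q_1=q_t^{\mathrm{new}}$ and $q_2=q_t^{\mathrm{hist}}$, with mixing coefficients $\alpha_1=\alpha_t$ and $\alpha_2=1-\alpha_t$. Then replace each proposal expectation $\bbE_{q_i}[\cdot]$ by the empirical average over the samples actually drawn from $q_i$. The weights in \eqref{eq:fixed-count-weights-app} should then appear after clearing a common factor.

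\textbf{Step 1: the unnormalized mixture is legitimate.} The corollary requires $\widetilde q_{\mathrm{mix}}=Z_q\,q_{\mathrm{mix}}$ for a single constant $Z_q$, where $q_{\mathrm{mix}}=\alpha_t q_t^{\mathrm{new}}+(1-\alpha_t)q_t^{\mathrm{hist}}$ is built from the normalized proposals. Section~\ref{app:pbsmc-shared-normalizer} shows that $\widetilde q_t^{\mathrm{new}}$ and $\widetilde q_t^{\mathrm{hist}}$ have the same normalizer $Z_{t-1}$. Hence $q_t^{\mathrm{new}}=\widetilde q_t^{\mathrm{new}}/Z_{t-1}$ and $q_t^{\mathrm{hist}}=\widetilde q_t^{\mathrm{hist}}/Z_{t-1}$. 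It follows that $\widetilde q_t^{\mathrm{mix}}$ in \eqref{eq:pbsmc-unnormalized-mixture-app} equals $Z_{t-1}q_{\mathrm{mix}}$ with the same mixing coefficients $\alpha_t$ and $1-\alpha_t$. This is the step that needs the most care. If the two normalizers differed, the unnormalized combination would correspond to a normalized mixture with distorted coefficients, and the factors $\alpha_t$ and $1-\alpha_t$ in the weights would be wrong.

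\textbf{Step 2: absolute continuity.} On $\Xspace_{\le t+1}$, every $z$ has either $\length(z)\ge2$ or $\length(z)\le t$. The PRM scores and the relevant LLM probabilities are positive, so $\widetilde q_t^{\mathrm{mix}}(z)>0$ wherever $\widetilde\pi_t^{(\beta_t)}(z)>0$. Thus $P\ll q_{\mathrm{mix}}$. By Lemma~\ref{lem:pbs-explicit-weight}, the ratio $\widetilde P/\widetilde q_{\mathrm{mix}}$ is exactly $F_t$ from \eqref{eq:power-backtrack-factor-app}.

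\textbf{Step 3: Monte Carlo substitution and the common factor.} Corollary~\ref{cor:mis-self-normalized} now gives
\[
\pi_t^{(\beta_t)}(f)=\frac{\alpha_t\bbE_{q_t^{\mathrm{new}}}[fF_t]+(1-\alpha_t)\bbE_{q_t^{\mathrm{hist}}}[fF_t]}{\alpha_t\bbE_{q_t^{\mathrm{new}}}[F_t]+(1-\alpha_t)\bbE_{q_t^{\mathrm{hist}}}[F_t]}.
\]
Replace $\bbE_{q_t^{\mathrm{new}}}[g]$ by $\frac1N\sum_{z\in\New_t}g(z)$, using the $N$ draws in $\New_t$. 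Replace $\bbE_{q_t^{\mathrm{hist}}}[g]$ by $\frac1{Nt}\sum_{z\in S_t}g(z)$, using the $Nt$ draws in $S_t$. Each replacement is an unbiased estimate of the corresponding term. Multiplying numerator and denominator by $N$, which cancels in the ratio, gives the self-normalized estimator
\[
\frac{\sum_{z\in\Pool_t}W_t(z)f(z)}{\sum_{z\in\Pool_t}W_t(z)},
\]
with $W_t(z)=\alpha_tF_t(z)$ for $z\in\New_t$ and $W_t(z)=(1-\alpha_t)F_t(z)/t$ for $z\in S_t$. These are exactly the weights in \eqref{eq:fixed-count-weights-app}.

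A multiset may contain the same prefix in both $\New_t$ and $S_t$. That case causes no issue, because the weights are assigned per sample according to the component it was drawn from.
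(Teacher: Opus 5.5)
Your proposal is correct and follows essentially the same route as the paper: apply Corollary~\ref{cor:mis-self-normalized} with the two components $q_t^{\mathrm{new}}$ and $q_t^{\mathrm{hist}}$, replace each expectation by its $1/N$ or $1/(Nt)$ sample average, and multiply numerator and denominator by $N$. Your Steps 1--2 (the shared normalizer $Z_{t-1}$ and absolute continuity) simply make explicit hypotheses that the paper's short proof takes implicitly from Section~\ref{app:pbsmc-shared-normalizer} and Lemma~\ref{lem:pbs-explicit-weight}.
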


\begin{proof}
Corollary~\ref{cor:mis-self-normalized} gives the numerator
\[
    \frac1N\sum_{z\in\New_t} f(z)\alpha_tF_t(z)
    +
    \frac1{Nt}\sum_{z\in S_t} f(z)(1-\alpha_t)F_t(z),
\]
with the analogous denominator for $f\equiv1$.  Multiplying numerator and denominator by $N$ gives \eqref{eq:fixed-count-weights-app}.  Multiplying all weights by a common positive constant does not change the self-normalized estimator.
\end{proof}

\subsection{Adaptive schedules for powers and mixture probabilities}
\label{app:pbsmc-schedules}

The power schedule controls how aggressively PB-SMC sharpens the powered PRM
target during search. Let $\Pool_t=S_t\uplus\New_t$ be the weighted pool
at round $t$, and let $C_t=|\Pool_t|$; with the default retained-history size
$H_t=Nt$, we have $C_t=N(t+1)$. We normalize the PRM scores on this multiset as
$a_z=r_{\mathrm{PRM}}(z)/\sum_{y\in\Pool_t}r_{\mathrm{PRM}}(y)$ and define the concentration statistic
$\sigma_t=\sum_{z\in\Pool_t}a_z^2$, so that $1/C_t\le\sigma_t\le1$. For a
power-step hyperparameter $\gamma>0$, we set
\begin{equation}
    \beta_{t+1}=\beta_t+\Delta_t,
    \qquad
    \Delta_t=\gamma\left(1-\left(\sigma_t-\frac1{C_t}\right)\right).
    \label{eq:pbsmc-beta-schedule-app}
\end{equation}
This gives $\gamma/C_t\le\Delta_t\le\gamma$, so $\beta_t$ is strictly increasing.
When the normalized PRM scores are nearly uniform, $\sigma_t$ is close to $1/C_t$,
and the schedule increases $\beta_t$ quickly to move toward stronger exploitation.
When the scores are already concentrated on a few particles, $\sigma_t$ is close
to $1$, and the schedule makes a smaller update, slowing further collapse. Thus,
$\gamma$ controls the overall rate at which PB-SMC moves from exploration
toward PRM-driven exploitation. In Section~\ref{sec:ablation_pbsmc_hyperparams},
we evaluate the sensitivity of PB-SMC to $\gamma$ on a held-out MATH training
validation set and use the selected default $\gamma=9$ in all main experiments.

The mixture schedule controls the relative contribution of newly generated children
and retained historical particles. We parameterize it through the history-to-new
ratio
\[
    g_t=\frac{1-\alpha_t}{\alpha_t},
    \qquad
    \alpha_t=\frac{1}{1+g_t}.
\]
We choose $g_t$ by linearly decreasing from $g_{\max}$ to $g_{\min}$:
\begin{equation}
    g_t
    =
    g_{\max}
    -
    \frac{t-1}{T-1}(g_{\max}-g_{\min}),
    \qquad
    \alpha_t=\frac1{1+g_t}.
    \label{eq:pbsmc-alpha-schedule-app}
\end{equation}
For $T=1$, we set $g_t=g_{\min}$. In all experiments, we use $g_{\max}=1$, so
$\alpha_t\ge1/2$ and the mixture weight on newly generated children is never
smaller than the mixture weight on retained historical particles. The terminal
value $g_{\min}$ determines the late-stage amount of stochastic backtracking:
smaller $g_{\min}$ makes $\alpha_t$ closer to one and emphasizes newly generated
children, making PB-SMC closer to frontier-only SMC; larger $g_{\min}$ allocates
more mixture mass to retained historical particles, increasing backtracking but
potentially revisiting stale prefixes more often. In
Section~\ref{sec:ablation_pbsmc_hyperparams}, we evaluate the sensitivity of
PB-SMC to $g_{\min}$ on the same held-out MATH training validation set and use the
selected default $g_{\min}=0.4$ in all main experiments.

\subsection{Proxy-target consistency}
\label{app:pbs-consistency-proof}

PB-SMC consistency follows from self-normalized importance sampling (SNIS).
We use the standard self-normalized importance sampling consistency theorem: if $X_1,\ldots,X_n\overset{\mathrm{i.i.d.}}{\sim}q$, $p\ll q$, and $f$ is integrable under $p$, then
\[
    \frac{\sum_{i=1}^{n} f(X_i)p(X_i)/q(X_i)}
    {\sum_{i=1}^{n} p(X_i)/q(X_i)}
    \xrightarrow[n\to\infty]{a.s.}
    \int f(x)p(x)\dd x
\]
\citep[Theorem~9.2]{mcbook}.  We apply this result on the finite state space $\Xspace_{\le T+1}$.

\begin{theorem}[PB-SMC proxy-target consistency]
\label{thm:pbs-consistency-app}
Fix $T<\infty$ and assume
\[
    |\Xspace_{\le T+1}|<\infty,\qquad
    0<R_{\min}\le r_{\mathrm{PRM}}(z)\le R_{\max}<\infty,\qquad
    0<\alpha_t<1,
\]
and $\sum_{x'}p_{\mathrm{LLM}}(x'\mid z)=1$ for every prefix $z$.  Let PB-SMC initialize $\Pool_0$ with $N$ one-step samples and weights $W_0(z)=r_{\mathrm{PRM}}(z)^{\beta_0}$.  For every fixed $t\le T$ and bounded $f:\Xspace_{\le T+1}\to\mathbb{R}$,
\begin{equation}
    \bbE_{\widehat\pi_t^N}[f(X)]
    :=
    \frac{\sum_{z\in\Pool_t}W_t(z)f(z)}
    {\sum_{z\in\Pool_t}W_t(z)}
    \xrightarrow[N\to\infty]{p}
    \bbE_{\pi_t^{\beta_t}}[f(X)].
    \label{eq:pbsmc-consistency-app}
\end{equation}
If $\beta_t$ is computed by \eqref{eq:pbsmc-beta-schedule-app}, then the same convergence holds with the adaptive powers.
\end{theorem}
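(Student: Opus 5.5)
Fix $t\le T$. The proof is an induction on $t$ with the inductive statement
\[
\mathrm{(H}_t\mathrm{)}:\quad \widehat\pi_t^N(g)\xrightarrow{p}\pi_t^{\beta_t}(g)\ \text{for every bounded } g .
\]
Since $\Xspace_{\le T+1}$ is finite, $\mathrm{(H}_t\mathrm{)}$ is equivalent to convergence in probability of the finitely many atoms $\widehat\pi_t^N(\{z\})\to\pi_t^{\beta_t}(z)$. This finiteness makes every convergence step a statement about finitely many real random variables, so continuous-mapping arguments apply directly.

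\textbf{Base case.} $\Pool_0$ consists of $N$ i.i.d.\ draws from $p_{\mathrm{LLM}}(\cdot\mid x_0)$ on $\Xspace_1$, with weights $r_{\mathrm{PRM}}^{\beta_0}$. This is standard self-normalized importance sampling (SNIS) with target $\widetilde\pi_0^{(\beta_0)}$ and proposal $p_{\mathrm{LLM}}$. The cited SNIS theorem gives almost sure, hence in-probability, convergence; integrability holds because $r_{\mathrm{PRM}}^{\beta_0}\le R_{\max}^{\beta_0}$.

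\textbf{Inductive step.} Assume $\mathrm{(H}_{t-1}\mathrm{)}$. Conditionally on $\mathcal F_{t-1}$, the $\sigma$-field generated by $(\Pool_{t-1},W_{t-1})$:
\begin{itemize}[leftmargin=*]
\item the $N$ children in $\New_t$ are i.i.d.\ from the empirical proposal $\widehat q^{\,\mathrm{new}}_t:=\sum_u\widehat\pi_{t-1}^N(u)\,p_{\mathrm{LLM}}(\cdot\mid u)$;
\item the $Nt$ retained particles in $S_t$ are i.i.d.\ from $\widehat q^{\,\mathrm{hist}}_t:=\widehat\pi_{t-1}^N$.
\end{itemize}
By $\mathrm{(H}_{t-1}\mathrm{)}$, finiteness of the space, and normalization of the expansion kernel, $\widehat q^{\,\mathrm{new}}_t\to q_t^{\mathrm{new}}$ and $\widehat q^{\,\mathrm{hist}}_t\to q_t^{\mathrm{hist}}$ atomwise in probability. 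Here $q_t^{\mathrm{new}}$ and $q_t^{\mathrm{hist}}$ are the normalized versions of \eqref{eq:qnew-unnormalized-app} and \eqref{eq:qhist-unnormalized-app}, which share the normalizer $Z_{t-1}$ by Section~\ref{app:pbsmc-shared-normalizer}.

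Split the numerator of the estimator as
\[
\frac1N\sum_{z\in\Pool_t}W_t(z)f(z)=\alpha_t\,\frac1N\sum_{z\in\New_t}F_t(z)f(z)+(1-\alpha_t)\,\frac{1}{Nt}\sum_{z\in S_t}F_t(z)f(z).
\]
The summand $F_tf$ is a fixed deterministic function. It is bounded because $r_{\mathrm{PRM}}\in[R_{\min},R_{\max}]$ and $\alpha_t\in(0,1)$. One checks that the denominator of \eqref{eq:power-backtrack-factor-main} is positive on $\Xspace_{\le t+1}$: it is at least $\alpha_t$ for length $\ge2$, and at least $(1-\alpha_t)R_{\min}^{\beta_{t-1}}$ for length $1\le t$. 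Hence $|F_tf|\le K\|f\|_\infty$ for a deterministic constant $K$.

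Conditional on $\mathcal F_{t-1}$, each average has conditional mean equal to the expectation of $F_tf$ under the corresponding empirical proposal. Its conditional variance is at most $K^2\|f\|_\infty^2/N$. Chebyshev's inequality, followed by taking expectations to remove the conditioning, therefore shows each average minus its conditional mean tends to $0$ in probability. The conditional means converge to $q_t^{\mathrm{new}}(F_tf)$ and $q_t^{\mathrm{hist}}(F_tf)$: these are finite sums of bounded values against atoms that converge in probability.

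The numerator thus converges to $q_t^{\mathrm{mix}}(F_tf)$. Since $q_t^{\mathrm{mix}}=\widetilde q_t^{\mathrm{mix}}/Z_{t-1}$ and $F_t=\widetilde\pi_t^{(\beta_t)}/\widetilde q_t^{\mathrm{mix}}$ by Lemma~\ref{lem:pbs-explicit-weight}, this limit equals $\widetilde\pi_t^{(\beta_t)}(f)/Z_{t-1}$; this is exactly the content of Lemma~\ref{lem:mis-identity}, Corollary~\ref{cor:mis-self-normalized} and Lemma~\ref{lem:fixed-count-estimator}. Taking $f\equiv1$, the denominator converges to $\widetilde\pi_t^{(\beta_t)}(1)/Z_{t-1}>0$. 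The continuous mapping theorem for the ratio then gives $\mathrm{(H}_t\mathrm{)}$. Absolute continuity $P\ll q_{\mathrm{mix}}$ holds because $\widetilde q_t^{\mathrm{mix}}>0$ wherever $\widetilde\pi_t^{(\beta_t)}>0$.

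\textbf{Main obstacle: the adaptive powers.} Under the schedule \eqref{eq:pbsmc-beta-schedule-app}, $\beta_t$ is a random, $\mathcal F_{t-1}$-measurable quantity, computed from $\Pool_{t-1}$ through $\sigma_{t-1}$ and $C_{t-1}$. The plan is to show by induction that $\beta_t\xrightarrow{p}\beta_t^\ast$ for a deterministic limit $\beta_t^\ast$, and to define the limiting target with these limits.

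The statistic $\sigma_{t-1}$ is a continuous functional of the unweighted empirical counts of $\Pool_{t-1}$. These counts are themselves i.i.d.\ draws from proposals that converge in probability, so they converge. One subtlety is that the $\sum_z a_z^2$ terms scale like $1/C_{t-1}$, so $\sigma_{t-1}\to0$ and $\Delta\to\gamma$. This still yields a deterministic limit $\beta_t^\ast=\beta_0+t\gamma$.

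Because $F_t$ depends continuously on $(\beta_{t-1},\beta_t)$, uniformly over the finite space with $r_{\mathrm{PRM}}$ bounded away from $0$ and $\infty$, the conditional argument above goes through with $F_t$ replaced by $F_t^{(\beta_{t-1},\beta_t)}$. The conditional Chebyshev bound is unaffected since the bound $K$ is uniform over $\beta$ in a compact set. The limit is then $\pi_t^{(\beta_t^\ast)}$, and $\pi_t^{(\beta_t)}-\pi_t^{(\beta_t^\ast)}\to0$ in probability. The statement "with the adaptive powers" should accordingly be read as $\widehat\pi_t^N(f)-\pi_t^{(\beta_t)}(f)\xrightarrow{p}0$.
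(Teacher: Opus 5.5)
Your proposal is correct and follows the same route as the paper's proof. Both argue by induction on $t$, use SNIS for the base case, show that the generated-child and retained-history samples converge to $q_t^{\mathrm{new}}$ and $q_t^{\mathrm{hist}}$, identify the limit through the shared normalizer $Z_{t-1}$ and the mixture-proposal identity, and handle the adaptive powers by continuous mapping.

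Two of your steps are more careful than the paper's. First, your conditional Chebyshev argument given $\mathcal F_{t-1}$ rigorously justifies the step where the paper invokes the i.i.d.\ SNIS theorem, even though the round-$t$ samples are actually drawn from the random empirical proposal $\widehat\pi_{t-1}^N$. Second, you observe that $\sigma_t$ is of order $1/C_t$, so $\sigma_t\to0$ and $\beta_t\to\beta_0+t\gamma$; this makes explicit the limiting powers that the paper's continuous-mapping remark leaves implicit.
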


\begin{proof}
Since $\Xspace_{\le T+1}$ is finite and $R$ is bounded above and below, all weights and normalizing constants are finite and positive.

\textbf{Base case.}
At $t=0$, the proposal is
\[
    q_0(z)=p_{\mathrm{LLM}}(z)\ind\{\length(z)=1\},
\]
and the target is
\[
    \widetilde\pi_0^{(\beta_0)}(z)
    =
    p_{\mathrm{LLM}}(z)r_{\mathrm{PRM}}(z)^{\beta_0}\ind\{\length(z)\le1\}.
\]
Thus the initial importance ratio is proportional to $r_{\mathrm{PRM}}(z)^{\beta_0}$ on $\Xspace_1$.  By self-normalized importance sampling consistency \citep[Theorem~9.2]{mcbook},
\[
    \bbE_{\widehat\pi_0^N}[f(X)]
    \xrightarrow[N\to\infty]{p}
    \bbE_{\pi_0^{(\beta_0)}}[f(X)].
\]

\textbf{Induction step.}
Assume the conclusion is correct until step $t-1$, that means $\bbE_{\widehat\pi_{t-1}^N}[f(X)]\to\bbE_{\pi_{t-1}^{(\beta_{t-1})}}[f(X)]$ in probability for every bounded $f$.  Multinomial sampling from $\Pool_{t-1}$ using normalized $W_{t-1}$ therefore gives empirical samples converging to $\pi_{t-1}^{(\beta_{t-1})}$.  Applying the normalized LLM kernel gives generated-child samples converging to
\[
    q_t^{\mathrm{new}}(x_{1:d})
    =
    \frac{
    p_{\mathrm{LLM}}(x_{1:d})r_{\mathrm{PRM}}(x_{1:d-1})^{\beta_{t-1}}\ind\{2\le d\le t+1\}
    }{Z_{t-1}},
\]
and redrawing historical prefixes gives samples converging to
\[
    q_t^{\mathrm{hist}}(z)
    =
    \frac{
    p_{\mathrm{LLM}}(z)r_{\mathrm{PRM}}(z)^{\beta_{t-1}}\ind\{\length(z)\le t\}
    }{Z_{t-1}}.
\]
By Section~\ref{app:pbsmc-shared-normalizer}, both proposals share normalizer $Z_{t-1}$, and their mixture is
\[
    q_t^{\mathrm{mix}}
    =
    \alpha_t q_t^{\mathrm{new}}
    +(1-\alpha_t)q_t^{\mathrm{hist}}.
\]
The round-$t$ target is
\[
    \widetilde\pi_t^{(\beta_t)}(z)
    =
    p_{\mathrm{LLM}}(z)r_{\mathrm{PRM}}(z)^{\beta_t}\ind\{\length(z)\le t+1\}.
\]
If $\widetilde\pi_t^{(\beta_t)}(z)>0$, then either $\length(z)\le t$ and $q_t^{\mathrm{hist}}(z)>0$, or $\length(z)=t+1$ and $q_t^{\mathrm{new}}(z)>0$ whenever the LLM assigns positive probability to $z$ through its parent.  Since $0<\alpha_t<1$ and $r_{\mathrm{PRM}}(z)>0$, $\pi_t^{\beta_t}\ll q_t^{\mathrm{mix}}$.

Lemma~\ref{lem:pbs-explicit-weight} gives
\[
    F_t(z)=
    \frac{\widetilde\pi_t^{(\beta_t)}(z)}
    {\widetilde q_t^{\mathrm{mix}}(z)}.
\]
Lemma~\ref{lem:fixed-count-estimator} shows that \eqref{eq:fixed-count-weights-app} is exactly the fixed-count self-normalized mixture-proposal estimator using $N$ samples from $q_t^{\mathrm{new}}$ and $Nt$ samples from $q_t^{\mathrm{hist}}$.  Applying self-normalized importance sampling consistency to the two proposal averages gives
\[
    \frac{\sum_{z\in\Pool_t}W_t(z)f(z)}
    {\sum_{z\in\Pool_t}W_t(z)}
    \xrightarrow[N\to\infty]{p}
    \bbE_{\pi_t^{(\beta_t)}}[f(X)].
\]
This completes the induction.

For adaptive powers, $\sigma_t=\sum_{z\in\Pool_t}a_z^2$ is a bounded continuous function of the empirical weighted measure on the finite state space.  The continuous mapping theorem gives convergence of $\sigma_t$ and therefore of $\beta_{t+1}=\beta_t+\Delta_t$.  Repeating the induction with the limiting adaptive powers proves the adaptive statement.
\end{proof}

\section{Additional Experiments}
\label{app:exp}

\subsection{Experimental Setup}
\label{app:exp_setup}

\paragraph{Benchmark Datasets.} We conduct experiments across four mathematical reasoning benchmarks spanning multiple difficulty levels and problem types, enabling comprehensive assessment of our approach across varying reasoning complexity.

\begin{itemize}
    \item \textbf{MATH500}~\citep{lightman2024lets} comprises 500 competition-level mathematics problems sampled from the MATH dataset~\citep{hendrycks2021measuring}, covering topics including algebra, geometry, number theory, and calculus. The problems are stratified across five difficulty levels, making this benchmark well-suited for evaluating general mathematical reasoning.
    \item \textbf{AMC23}~\citep{amc2023} consists of 40 problems from the 2023 American Mathematics Competition, featuring intermediate-level competition mathematics that bridges the gap between standard curriculum and advanced problem-solving. These problems require creative application of mathematical concepts rather than routine computation.
    \item \textbf{AIME24}~\citep{aime24} contains 30 problems from the 2024 American Invitational Mathematics Examination, representing extremely challenging competition-level mathematics. AIME problems require sophisticated multi-step problem-solving strategies and deep mathematical understanding, making this benchmark discriminative at high performance levels.
    \item \textbf{Minerva Math}~\citep{lewkowycz2022solving} is a collection of advanced STEM reasoning problems spanning mathematics, physics, chemistry, and biology. Unlike purely math benchmarks, Minerva Math evaluates the model's ability to apply mathematical reasoning in broader scientific contexts, including problems that require domain knowledge alongside formal derivation.
    
\end{itemize}

\paragraph{Large Language Models.} We conduct experiments using three state-of-the-art open-source instruction-tuned language models spanning a range of parameter scales, enabling systematic evaluation of our approach across varying computational budgets and model capabilities.

\begin{itemize}
    \item \textbf{Qwen2.5-7B-Instruct}~\citep{qwen2.5} is a 7-billion-parameter instruction-tuned model from the Qwen2.5 family, which has demonstrated strong performance on mathematical reasoning benchmarks relative to its size. Its balance of capability and computational cost makes it a representative mid-scale LLM for inference-time scaling experiments.
    \item \textbf{Qwen2.5-3B-Instruct}~\citep{qwen2.5} is a 3-billion-parameter variant of the same family, providing a lightweight LLM that allows us to assess whether inference-time scaling methods can compensate for reduced model capacity. Evaluating at this scale is particularly relevant under token-efficiency objectives.
    \item 
\textbf{Phi-4-mini-Instruct}~\citep{abouelenin2025phi} is a compact yet capable instruction-tuned model from Microsoft's Phi-4 family, designed to achieve strong reasoning performance at a small parameter footprint. Its inclusion allows us to evaluate our methods on a model from a distinct training lineage, providing evidence of generalization beyond the Qwen model family.
    
\end{itemize}

\paragraph{Process Reward Model.}
We use \textbf{Qwen2.5-Math-PRM-7B}~\cite{zhang2025lessons} as the process reward model throughout our experiments. The PRM provides step-level reward signals for evaluating partial reasoning trajectories during inference and is built on Qwen2.5-Math-7B-Instruct with step-level preference training. We choose this PRM due to its widespread adoption and strong empirical performance in mathematical reasoning.
\paragraph{Baselines.}
We compare against the following inference-time scaling methods:

\begin{itemize}

\item Base model: Single greedy pass without test-time scaling—the minimal compute baseline

\item Self-Consistency \cite{wang2023selfconsistency}: Samples 32 independent trajectories and selects the final answer via majority voting

\item Best-of-N (BoN) \cite{brown2024large}: Generates 32 complete solutions, ranks them by PRM scores of final answers, and selects the highest-scored trajectory

\item Beam Search \cite{snell2025scaling}: Maintains beam width 32 with branching factor 4 and top-8 selection per step using cumulative mean PRM scores.

\item Particle Filtering \cite{puri2025probabilistic} : maintaining 32 particles with systematic resampling triggered when effective sample size drops below threshold

\item MCTS \cite{inoue2025widerdeeperscalingllm}: Monte Carlo tree search using PRM-guided expansion and UCB selection, with final answer determined by tree statistics aggregation

\item DVTS \cite{beeching2024scaling}: Diverse Verified Tree Search—partitions the search budget into $M$ independent subtrees, each explored via beam search with width $N/M$. We use $M=4$ subtrees with beam width 8 per subtree (total budget $N=32$)

\end{itemize}

\paragraph{PRM scoring and search budgets.}
For every prefix, we use the PRM score of the last generated reasoning step as the prefix score $r_{\mathrm{PRM}}(z)$; final answers are selected by the best PRM score among complete solutions.  Unless otherwise stated, all methods use a fixed step horizon $T=30$ following the step-level reasoning setup of \cite{feng2025stepbystep}.

\paragraph{Implementation Details.} 
We evaluate all baselines with $N=32$ candidate expansions per step, while our proposed methods use $N=8$ to demonstrate computational efficiency. All sampling-based methods use temperature 0.7. For baseline configurations, we follow their original settings: Beam Search and DVTS employ greedy selection with PRM-based pruning at each decoding step.

For our proposed methods, \textbf{PB-SMC} maintains $N=8$ particles with step-wise PRM scoring for resampling weights. \textbf{SPS} operates with budget $N=8$, employing adaptive subpool expansion based on mean PRM scores and dynamic pruning strategies. All methods use \textbf{Qwen2.5-Math-7B-PRM} as the default process reward model.

All reported results are averaged over 5 random seeds for AMC23 and AIME24, and 3 seeds for Minerva and MATH500, with standard deviations reported where applicable.

\subsection{Extended Main Results}
\label{sec:full_results}
\begin{table*}[t]
\vspace{-3mm}
\centering
\scriptsize
\caption{%
    \textbf{Comparison of test time scaling methods across benchmarks and models.} 
    We report accuracy (Acc, $\uparrow$ better) and token usage (\#Tokens, $\downarrow$ better) across four benchmarks.
    Results include mean and standard deviation over 5 random seeds for AMC23 vs AIME24 and 3 random seeds for Minerva vs MATH500.
    \textbf{Bold} indicates best performance; \underline{underline} indicates runner-up.
}
\label{tab:full_table}
\resizebox{\linewidth}{!}{%
\begin{tabular}{l c cc cc cc cc cc}
\toprule
& & \multicolumn{2}{c}{\textbf{Minerva}}
& \multicolumn{2}{c}{\textbf{Math500}}
& \multicolumn{2}{c}{\textbf{AMC23}}
& \multicolumn{2}{c}{\textbf{AIME24}}
& \multicolumn{2}{c}{\textbf{Average}} \\
\cmidrule(lr){3-4}\cmidrule(lr){5-6}\cmidrule(lr){7-8}\cmidrule(lr){9-10}\cmidrule(lr){11-12}
\textbf{Method} & \textbf{N} &
\textbf{Accuracy\,$\uparrow$} & \textbf{\#Tokens\,$\downarrow$} &
\textbf{Accuracy\,$\uparrow$} & \textbf{\#Tokens\,$\downarrow$} &
\textbf{Accuracy\,$\uparrow$} & \textbf{\#Tokens\,$\downarrow$} &
\textbf{Accuracy\,$\uparrow$} & \textbf{\#Tokens\,$\downarrow$} &
\textbf{Accuracy\,$\uparrow$} & \textbf{\#Tokens\,$\downarrow$} \\
\midrule

\multicolumn{12}{c}{\textbf{Qwen2.5-7B-Instruct}} \\
\cmidrule{1-12}

Base Model & 1 & \sd{34.68}{1.12} & 617 & \sd{76.00}{0.80} & 584 & \sd{54.50}{2.92} & 874 & \sd{11.33}{1.63} & 1039 & 44.13 & 779 \\
\addlinespace[2pt]

Pass@\textit{k}$^\dagger$ & 32 & \sd{52.33}{0.21} & 19765 & \sd{93.67}{0.12} & 18788 & \sd{90.50}{2.45} & 27979 & \sd{31.33}{3.40} & 33262 & 66.96 & 24949 \\
\addlinespace[2pt]

Self-Consistency & 32 & \sd{38.85}{0.56} & 19765 & \sd{83.27}{0.31} & 18788 & \sd{62.00}{1.87} & 27979 & \sd{16.67}{0.00} & 33262 & 50.20 & 24949 \\
\addlinespace[2pt]

Best-of-N & 32 & \sd{38.48}{0.42} & 19765 & \sd{86.40}{0.20} & 18788 & \sd{65.00}{1.58} & 27564 & \textbf{\sd{20.67}{1.33}} & 33344 & 52.64 & 24865 \\
\addlinespace[2pt]

Beam Search & 32 & \sd{40.20}{0.56} & 23900 & \sd{86.27}{0.31} & 22771 & \sd{62.00}{2.92} & 39195 & \sd{20.00}{2.11} & 55150 & 52.12 & 35254 \\
\addlinespace[2pt]

DVTS & 32 & \sd{39.22}{0.43} & 23549 & \sd{86.73}{0.23} & 22978 & \sd{63.00}{1.00} & 34670 & \sd{16.67}{2.11} & 46725 & 51.41 & 31981 \\
\addlinespace[2pt]

MCTS & 32 & \sd{36.27}{0.87} & 25208 & \sd{78.47}{0.81} & 22006 & \sd{58.00}{1.00} & 30328 & \sd{16.67}{1.33} & 35644 & 47.35 & 28297 \\
\addlinespace[2pt]

SMC & 32 & \sd{40.32}{0.94} & 21642 & \sd{86.67}{0.41} & 23307 & \sd{62.00}{3.54} & 35546 & \sd{18.65}{2.66} & 41031 & 51.91 & 30382 \\
\addlinespace[2pt]

\cmidrule{1-12}
\addlinespace[1pt]

\textbf{PB-SMC (ours)} & 8 & \underline{\sd{41.79}{0.56}} & \underline{13216} & \underline{\sd{87.47}{0.31}} & \underline{10628} & \underline{\sd{66.50}{2.85}} & \textbf{17637} & \sd{19.33}{1.50} & \underline{27581} & \underline{53.77} & \underline{17266} \\
\addlinespace[2pt]


\textbf{SPS (ours)} & \textbf{8} & \textbf{\sd{43.26}{0.77}} & \textbf{9387} & \textbf{\sd{87.87}{0.70}} & \textbf{9076} & \textbf{\sd{67.50}{2.50}} & \underline{22446} & \underline{\sd{20.00}{2.37}} & \textbf{24347} & \textbf{54.66} & \textbf{16314} \\
\addlinespace[2pt]

\midrule

\multicolumn{12}{c}{\textbf{Qwen2.5-3B-Instruct}} \\
\cmidrule{1-12}

Base Model & 1 & \sd{29.53}{1.29} & 617 & \sd{64.47}{0.90} & 607 & \sd{36.50}{4.06} & 839 & \sd{6.00}{2.49} & 1046 & 34.13 & 777 \\
\addlinespace[2pt]

Pass@\textit{k}$^\dagger$ & 32 & \sd{46.20}{0.21} & 19765 & \sd{91.73}{0.12} & 19427 & \sd{87.50}{2.74} & 26863 & \sd{26.67}{3.65} & 33477 & 63.03 & 24883 \\
\addlinespace[2pt]

Self-Consistency & 32 & \sd{33.21}{0.21} & 19765 & \sd{77.87}{0.50} & 19427 & \sd{54.00}{3.39} & 26863 & \sd{14.67}{1.63} & 33477 & 44.94 & 24883 \\
\addlinespace[2pt]

Best-of-N & 32 & \sd{33.70}{0.42} & 19765 & \sd{81.20}{0.40} & 19427 & \sd{61.50}{1.22} & 26863 & \sd{16.00}{1.33} & 33477 & 48.10 & 24883 \\
\addlinespace[2pt]

Beam Search & 32 & \sd{35.78}{0.57} & 23131 & \underline{\sd{82.27}{0.31}} & 25322 & \sd{60.50}{1.87} & 41169 & \sd{14.00}{1.33} & 60382 & 48.14 & 37501 \\
\addlinespace[2pt]

DVTS & 32 & \sd{35.05}{0.21} & 24971 & \sd{81.80}{0.20} & 26314 & \textbf{\sd{65.50}{2.92}} & 42538 & \sd{13.33}{2.11} & 57349 & 48.92 & 37793 \\
\addlinespace[2pt]

MCTS & 32 & \sd{33.58}{0.35} & 26121 & \sd{72.00}{0.98} & 22658 & \sd{56.50}{1.22} & 31825 & \sd{15.33}{2.13} & 34589 & 44.35 & 28798 \\
\addlinespace[2pt]

SMC & 32 & \sd{36.52}{0.75} & 25143 & \sd{81.33}{0.41} & 24362 & \sd{59.50}{1.87} & 43715 & \sd{14.65}{2.66} & 52562 & 48.00 & 36446 \\
\addlinespace[2pt]

\cmidrule{1-12}
\addlinespace[1pt]

\textbf{PB-SMC (ours)} & 8 & \underline{\sd{38.23}{0.64}} & \underline{15450} & \sd{82.07}{0.50} & \underline{11727} & \sd{62.50}{2.50} & \underline{20160} & \underline{\sd{16.67}{2.11}} & \underline{29108} & \underline{49.87} & \underline{19111} \\
\addlinespace[2pt]


\textbf{SPS (ours)} & \textbf{8} & \textbf{\sd{38.73}{0.93}} & \textbf{10800} & \textbf{\sd{83.20}{0.60}} & \textbf{9359} & \underline{\sd{63.00}{1.87}} & \textbf{16644} & \textbf{\sd{17.33}{2.49}} & \textbf{25130} & \textbf{50.57} & \textbf{15483} \\
\addlinespace[2pt]

\midrule

\multicolumn{12}{c}{\textbf{Phi-4-mini-Instruct}} \\
\cmidrule{1-12}

Base Model & 1 & \sd{23.90}{1.11} & 535 & \sd{62.80}{0.80} & 525 & \sd{42.50}{3.54} & 781 & \sd{7.33}{3.27} & 948 & 34.13 & 697 \\
\addlinespace[2pt]

Pass@\textit{k}$^\dagger$ & 32 & \sd{54.53}{0.21} & 17119 & \sd{91.07}{0.12} & 16774 & \sd{79.00}{3.39} & 25003 & \sd{28.67}{5.81} & 30316 & 63.32 & 22303 \\
\addlinespace[2pt]

Self-Consistency & 32 & \sd{33.70}{0.21} & 17119 & \sd{77.00}{0.40} & 16774 & \sd{53.50}{2.55} & 25003 & \sd{13.33}{0.00} & 30316 & 44.38 & 22303 \\
\addlinespace[2pt]

Best-of-N & 32 & \sd{34.80}{0.42} & 17119 & \sd{80.80}{0.20} & 16774 & \sd{50.50}{4.30} & 25003 & \sd{14.67}{2.67} & 30316 & 45.19 & 22303 \\
\addlinespace[2pt]

Beam Search & 32 & \sd{33.46}{0.37} & 23768 & \sd{81.20}{0.20} & 21397 & \sd{56.50}{4.06} & 37513 & \underline{\sd{16.00}{3.89}} & 39635 & 46.79 & 30578 \\
\addlinespace[2pt]

DVTS & 32 & \sd{33.82}{0.74} & 22140 & \sd{81.53}{0.12} & 20956 & \sd{55.00}{2.24} & 36815 & \underline{\sd{16.00}{3.27}} & 37532 & 46.59 & 29361 \\
\addlinespace[2pt]

MCTS & 32 & \sd{32.75}{2.08} & 23108 & \sd{73.20}{0.56} & 28036 & \sd{55.00}{1.50} & 33320 & \sd{14.67}{1.66} & 30316 & 43.91 & 28695 \\
\addlinespace[2pt]

SMC & 32 & \sd{34.44}{0.75} & 24907 & \sd{81.40}{0.59} & 25459 & \sd{54.50}{3.32} & 42574 & \sd{14.00}{0.98} & 45448 & 46.09 & 34597 \\
\addlinespace[2pt]

\cmidrule{1-12}
\addlinespace[1pt]

\textbf{PB-SMC (ours)} & 8 & \textbf{\sd{37.87}{0.37}} & \underline{13198} & \underline{\sd{82.07}{0.50}} & \underline{10422} & \textbf{\sd{57.50}{3.06}} & \underline{15547} & \textbf{\sd{17.33}{3.89}} & \underline{21258} & \textbf{48.69} & \underline{15106} \\
\addlinespace[2pt]


\textbf{SPS (ours)} & \textbf{8} & \underline{\sd{35.42}{0.56}} & \textbf{9832} & \textbf{\sd{82.67}{0.50}} & \textbf{7959} & \underline{\sd{57.00}{2.92}} & \textbf{13497} & \underline{\sd{16.00}{1.33}} & \textbf{15056} & \underline{47.77} & \textbf{11586} \\
\addlinespace[2pt]


\bottomrule
\end{tabular}%
} 
\end{table*}

This section presents the full experimental results (Table \ref{tab:full_table}) across all three LLMs, extending the main-paper analysis (Table~\ref{tab:main_search}) with the additional model omitted due to space constraints.
Overall, the trends observed in the main paper hold consistently across all settings.

Across all three LLMs and four benchmarks, both PB-SMC and SPS consistently outperform strong inference-time search baselines while using substantially fewer tokens.
In all cases, our methods achieve competitive or superior accuracy using only $N=8$ candidates, compared to $N=32$ required by all baselines.
When averaged across all model--benchmark pairs, SPS yields a 49.4\% reduction in token usage while improving accuracy, and PB-SMC achieves a 36.7\% token reduction with improved or comparable accuracy.

On Qwen2.5-7B-Instruct, SPS performs best overall, achieving the highest average accuracy while significantly reducing token consumption.
This advantage is particularly pronounced on Minerva and AMC23, where SPS attains strong gains despite maintaining a fourfold reduction in candidate count.
We attribute this behavior to the use of Qwen2.5-Math-7B-PRM as the preference reward model, which is a natural fit for the base LLM and likely provides more informative guidance.

On Phi-4-mini-instruct, PB-SMC emerges as the most reliable method, achieving the highest aggregate accuracy across benchmarks.
While SPS remains competitive, PB-SMC demonstrates more stable performance across tasks, reflecting its conservative but robust exploration strategy.
This robustness is consistently observed across all benchmarks, including more challenging settings such as AIME24.

On the third LLM, we observe the same qualitative trends as in the other two settings.
Both PB-SMC and SPS outperform all baselines in the accuracy--efficiency trade-off regime, maintaining strong performance at a fraction of the token cost.
PB-SMC again exhibits stable scaling behavior across benchmarks, while SPS tends to achieve higher peak accuracy when the PRM is well aligned with the underlying LLM.

Taken together, these results demonstrate that the improvements of PB-SMC and SPS are not model-specific but generalize across diverse LLMs.
Moreover, they highlight a consistent accuracy--efficiency advantage over existing inference-time search methods, even under severely constrained sampling budgets.

\subsection{Experimental Configuration for Model Serving}
\label{app:serving_config}

\paragraph{Hardware and Serving Infrastructure.}
We deploy the LLM and process reward model on separate NVIDIA A100 GPUs using vLLM framework for efficient batched inference. This configuration enables parallel evaluation of reasoning trajectories and step-level reward computation without memory contention. Table~\ref{tab:serving_config} details the serving configuration for each model.

\begin{table}[t]
\centering
\caption{vLLM serving configuration for LLMs and reward models. Each model is allocated a dedicated A100 GPU to enable efficient parallel inference.}
\vspace{2mm}
\label{tab:serving_config}
\small
\setlength{\tabcolsep}{4pt}
\begin{tabular}{lcccc}
\toprule
\textbf{Configuration} 
& \makecell{\textbf{Qwen2.5-7B}\\\textbf{Instruct}} 
& \makecell{\textbf{Qwen2.5-Math}\\\textbf{PRM-7B}} 
& \makecell{\textbf{Qwen2.5-3B}\\\textbf{Instruct}} 
& \makecell{\textbf{Phi-4-mini}\\\textbf{Instruct}} \\
\midrule
GPU Allocation           & A100 & A100 & A100 & A100 \\
Tensor Parallel Size     & 1    & 1    & 1    & 1    \\
Max Model Length         & 4096 & 4096 & 4096 & 4096 \\
GPU Memory Utilization   & 0.9  & 0.9  & 0.9  & 0.9  \\
\bottomrule
\end{tabular}
\end{table}


\paragraph{Frontier vs.\ Backtracking Test-Time Scaling.}
Common test-time scaling baselines are \emph{frontier} methods that only expand forward from the current prefix. Best-of-$N$ and Self-Consistency generate multiple complete reasoning trajectories via sampling and then select/aggregate a final answer, without revising earlier prefix decisions. 
Beam Search and verifier-guided variants such as DVTS similarly maintain a set of partial hypotheses at the decoding frontier and apply step-wise expand-and-prune, advancing the frontier rather than backtracking to earlier steps. 


In contrast, our method is an \emph{backtracking} algorithm: it can revert to an earlier step $i<t$ and branch from that prefix to refine the solution. 

\subsection{Runtime Comparison}
\label{app:runtime}

Figure~\ref{fig:runtime_comparison} presents wall-clock runtime measurements for all evaluated methods on the MATH500 benchmark. Measurements are conducted on the same hardware configuration described in Section~\ref{app:serving_config}, with runtime reported as the average time to solve a single problem across all 500 instances.

\begin{figure}[h]
    \centering
    \begin{subfigure}[b]{0.48\textwidth}
        \centering
        \includegraphics[width=\textwidth]{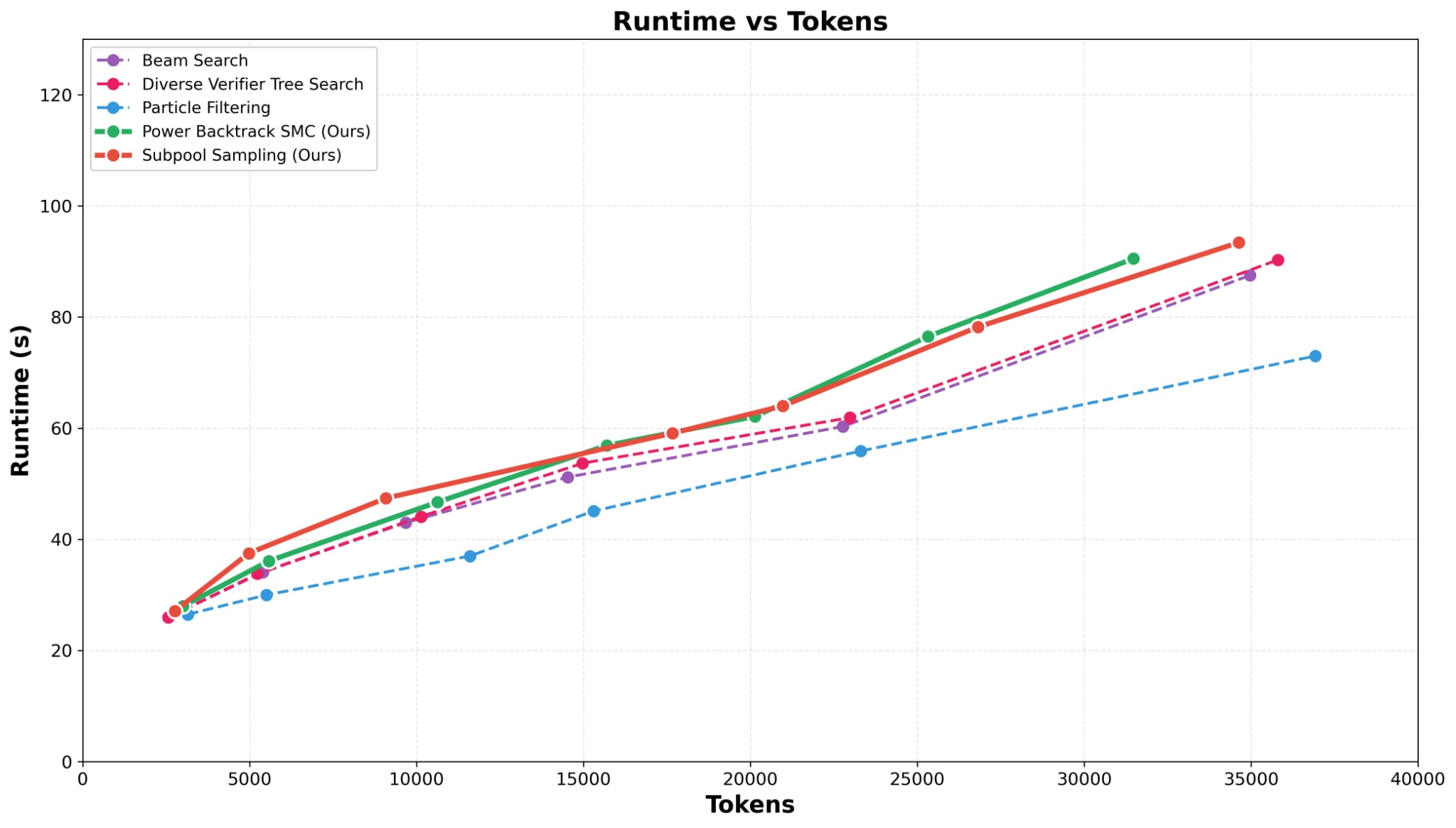}
        \caption{Runtime vs Tokens}
        \label{fig:runtime_only}
    \end{subfigure}
    \hfill
    \begin{subfigure}[b]{0.48\textwidth}
        \centering
        \includegraphics[width=\textwidth]{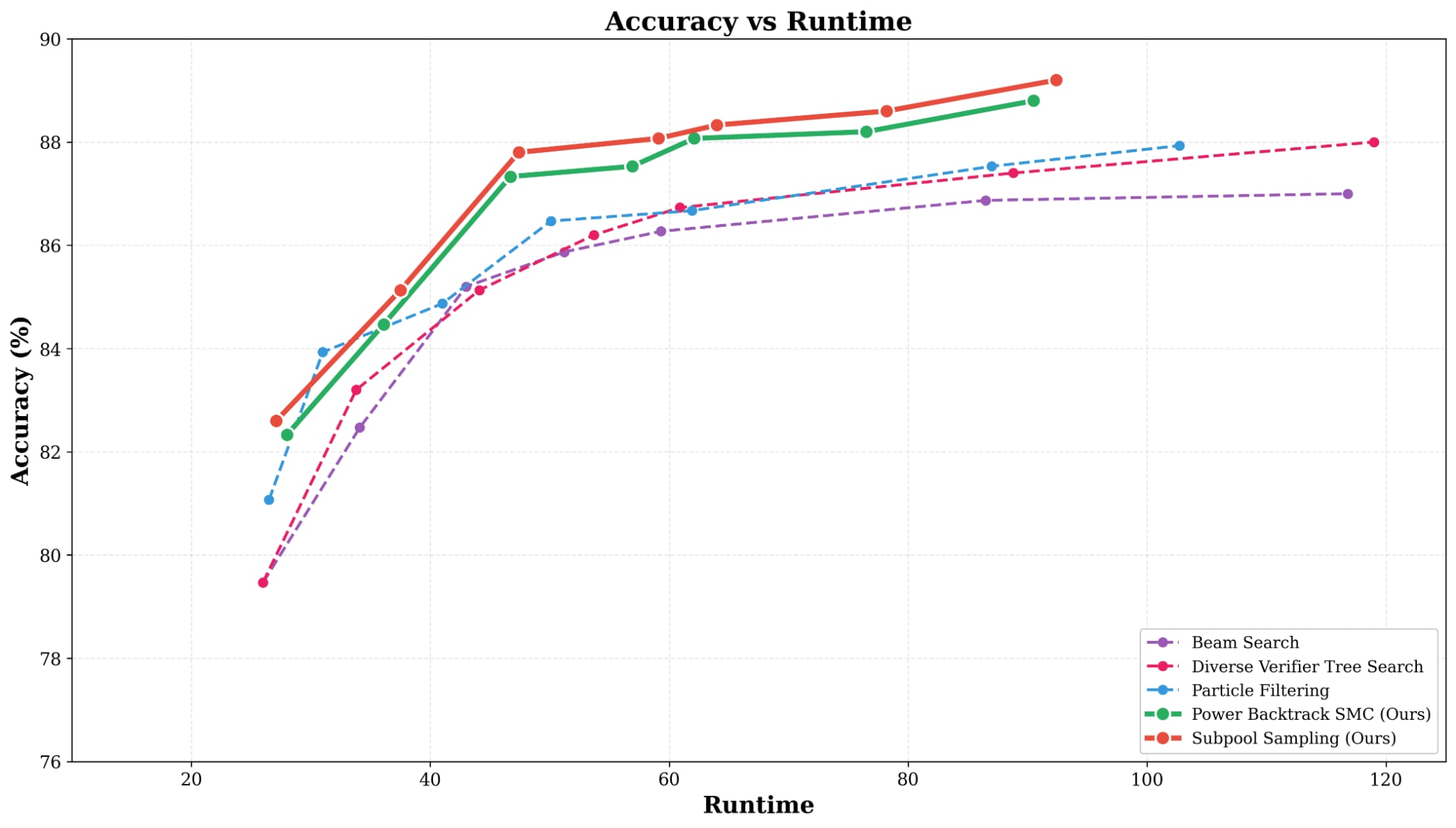}
        \caption{Runtime vs Accuracy}
        \label{fig:acc_runtime}
    \end{subfigure}
    \caption{Runtime comparison across test-time scaling methods on MATH500. Our proposed methods (PB-SMC and SPS) scale linearly in the number of tokens, and carry only a small runtime overhead of roughly 10\% in comparison with the nearest competitive baselines (beam search, DVTS). Taking into account the token count advantage of our methods (3X), this overhead becomes negligible.}
    \label{fig:runtime_comparison}
\end{figure}

As shown in Figure~\ref{fig:runtime_comparison}, methods that generate $N=32$ independent samples (Self-Consistency, Best-of-N) exhibit the highest runtime due to sequential generation without pruning. Beam Search and DVTS achieve moderate runtime by leveraging early pruning, though they still maintain large beam widths. Our proposed methods, operating at $N=8$, demonstrate substantial runtime improvements---approximately $2.5\times$ faster than Beam Search---while achieving competitive accuracy as reported in Table~\ref{tab:main_search}. Notably, the runtime of PB-SMC and SPS remains comparable despite their different search strategies, as both methods maintain similar numbers of active candidates at each step.

The runtime efficiency of our methods stems from two key factors: (1) reduced candidate set size ($N=8$ vs $N=32$), which directly reduces the number of forward passes through the LLM, and (2) efficient resampling and pruning strategies that avoid generating low-quality trajectories. Combined with the memory efficiency discussed in Section~\ref{app:serving_config}, these results demonstrate that our methods achieve a favorable trade-off between computational cost and reasoning performance.

\section{Ablation Study}
\label{sec:ablation}

We conduct ablations to isolate the effect of historical backtracking and the stochastic mechanisms used to access the persistent pool. Unless otherwise stated, all experiments use Qwen2.5-7B-Instruct as the base generator and Qwen2.5-Math-PRM-7B as the process reward model. We study two questions:
(i) whether allowing search to revisit historical prefixes improves over frontier-only selection, and
(ii) how robust SPS is to the backtrack ratio.

\subsection{Hyperparameter Sensitivity}
\label{sec:hyperparam_sensitivity}

We study the sensitivity of the two proposed backtracking mechanisms to their main
search hyperparameters. For SPS, the default algorithm does not use a fixed
backtrack ratio. Instead, it uses the adaptive subpool ratio introduced in
Section~\ref{sec:methodology}, where the ratio is determined by the average PRM
score of the current persistent pool. We therefore treat fixed-ratio SPS as a
diagnostic variant: it is used only to understand how SPS behaves as the amount of
stochastic backtracking is varied, not to tune the main SPS results. For PB-SMC,
we study the two schedule hyperparameters defined in
Appendix~\ref{app:pbsmc-schedules}: the power-step parameter $\gamma$ in ~\cref{eq:pbsmc-beta-schedule-app}, which controls how quickly the powered PRM
target sharpens through the adaptive sequence $\{\beta_t\}$, and the mixture
parameter $g_{\min}$ in Eq.~\eqref{eq:pbsmc-alpha-schedule-app}, which controls
the terminal history-to-new ratio $g_t=(1-\alpha_t)/\alpha_t$ and hence the
mixture probability $\alpha_t=1/(1+g_t)$. Smaller $g_{\min}$ places more mass on
newly generated children near the end of search and makes PB-SMC closer to
frontier-only SMC, while larger $g_{\min}$ retains more mass on historical
particles and increases stochastic backtracking.

\subsubsection{Sensitivity to SPS subpool ratio}
\label{sec:ablation_ratio}

The SPS method used in our main experiments employs an adaptive subpool ratio rather than a fixed ratio.
To isolate the effect of the subpool ratio, we also evaluate fixed-ratio variants
on Math500. In this sweep, the adaptive rule for $\rho_t$ is disabled and replaced
by a constant ratio in $\{0.1,0.2,\ldots,0.9\}$. The fixed-ratio results should
therefore be interpreted as a sensitivity analysis, not as the default SPS
configuration. The adaptive-ratio result, corresponding to the SPS algorithm used
in the main experiments, is reported separately in the ``Adaptive'' column.

\begin{table}[t]
\centering
\scriptsize
\caption{%
    \textbf{Sensitivity to SPS subpool ratio on Math500.}
    We report accuracy (Acc, \%, $\uparrow$ better) for three base models.
    Fixed ratios are used only for sensitivity analysis. The default SPS method
    uses the adaptive ratio
    $\rho_t=|\Pool_{t-1}|^{-1}\sum_{z\in\Pool_{t-1}}r_{\mathrm{PRM}}(z)$, reported in the
    ``Adaptive'' column. The final column reports mean $\pm$ standard deviation
    across fixed ratios only. \textbf{Bold} indicates the best fixed-ratio result
    for each model.
}
\vspace{2mm}
\label{tab:ablation_ratio_all}
\renewcommand\tabcolsep{2.5pt}
\renewcommand\arraystretch{1.1}
\begin{tabular}{l ccccccccc c c}
\toprule
\textbf{Model}
& \textbf{0.1} & \textbf{0.2} & \textbf{0.3} & \textbf{0.4} & \textbf{0.5}
& \textbf{0.6} & \textbf{0.7} & \textbf{0.8} & \textbf{0.9}
& \textbf{Adaptive}
& \textbf{Mean $\pm$ Std} \\
\midrule
Qwen2.5-7B-Instruct
& 86.0 & 87.6 & \textbf{88.0} & 86.4 & 86.2 & 87.0 & 87.2 & 87.4 & 87.4
& 87.8
& $87.02 \pm 0.64$ \\
\addlinespace[2pt]
Qwen2.5-3B-Instruct
& 82.4 & 83.0 & 82.8 & 82.8 & 82.6 & 82.8 & \textbf{83.2} & 83.0 & 82.6
& 83.2
& $82.80 \pm 0.23$ \\
\addlinespace[2pt]
Phi-4-mini-instruct
& 81.6 & 82.2 & 82.0 & \textbf{82.6} & 82.2 & 82.4 & 82.0 & 82.4 & 82.2
& 82.6
& $82.18 \pm 0.27$ \\
\bottomrule
\end{tabular}
\end{table}

Table~\ref{tab:ablation_ratio_all} shows that SPS is not highly sensitive to the
fixed subpool ratio. Across fixed ratios, accuracy varies by only $2.0$ points for
Qwen2.5-7B-Instruct, $0.8$ points for Qwen2.5-3B-Instruct, and $1.0$ point for
Phi-4-mini-instruct. The standard deviations across the fixed-ratio sweep are also
small: $0.64$, $0.23$, and $0.27$, respectively. Thus SPS remains stable across a
broad range of stochasticity levels.

The adaptive-ratio default is competitive with the best fixed ratios on all three
models, achieving $87.8\%$, $83.2\%$, and $82.6\%$ accuracy for
Qwen2.5-7B-Instruct, Qwen2.5-3B-Instruct, and Phi-4-mini-instruct, respectively.
This supports the use of the adaptive ratio in the main experiments: it avoids
introducing an additional fixed-ratio hyperparameter while preserving the benefit
of stochastic access to historical prefixes. Intuitively, the fixed-ratio sweep
shows that SPS works over a wide range of exploration levels, while the adaptive
rule provides a simple automatic mechanism for choosing the exploration level from
the current pool quality.

\subsubsection{Sensitivity to PB-SMC power and mixture schedules}
\label{sec:ablation_pbsmc_hyperparams}

We next study the two main schedule hyperparameters of PB-SMC. The first is the
power-step parameter $\gamma$ in Eq.~\eqref{eq:pbsmc-beta-schedule-app}. This
parameter controls the rate at which the powered PRM target sharpens through the
adaptive sequence $\{\beta_t\}$. Larger values of $\gamma$ increase the influence
of high PRM scores more aggressively and make PB-SMC more exploitative, whereas
smaller values keep the sampling distribution more diffuse. The second is
$g_{\min}$ in Eq.~\eqref{eq:pbsmc-alpha-schedule-app}, the terminal value of the
history-to-new ratio $g_t=(1-\alpha_t)/\alpha_t$. Since
$\alpha_t=1/(1+g_t)$, smaller $g_{\min}$ makes $\alpha_t$ closer to one near the
end of search and emphasizes newly generated children. Larger $g_{\min}$ assigns
more mixture mass to retained historical particles, increasing the amount of
stochastic backtracking.

To avoid tuning on the reported test benchmarks, we construct a held-out validation
set of $500$ problems sampled from the MATH training split. This validation set is
used only to choose global PB-SMC hyperparameters. The selected values are then
fixed for all main experiments across datasets and models. We perform a sequential
one-dimensional sweep over the schedules in Appendix~\ref{app:pbsmc-schedules}:
we first fix $g_{\min}=0.5$ and vary $\gamma$; after selecting $\gamma=9$, we fix
$\gamma=9$ and vary $g_{\min}$. This validation procedure selects $\gamma=9$ and
$g_{\min}=0.4$, which are used as the default PB-SMC hyperparameters in all main
experiments.

\begin{table}[t]
\vspace{2mm}
\centering
\scriptsize
\caption{%
    \textbf{Sensitivity of PB-SMC hyperparameters on a held-out MATH validation set.}
    We report accuracy (Acc, \%, $\uparrow$ better) for Qwen2.5-7B-Instruct on
    $500$ validation problems sampled from the MATH training split. For the
    $\gamma$ sweep, we fix $g_{\min}=0.5$. For the $g_{\min}$ sweep, we fix
    $\gamma=9$. The final column reports mean $\pm$ standard deviation across the
    sweep. \textbf{Bold} indicates best performance.
}
\vspace{2mm}
\label{tab:pbsmc_hyperparam_sensitivity}
\renewcommand\tabcolsep{2.5pt}
\renewcommand\arraystretch{1.1}

\begin{tabular}{l cccccccccccc}
\toprule
\multicolumn{13}{c}{\textbf{Power schedule: varying $\gamma$ with $g_{\min}=0.5$}} \\
\midrule
$\gamma$
& 1 & 3 & 5 & 7 & 9 & 11 & 13 & 15 & 17 & 19 & 21
& Mean $\pm$ Std \\
\midrule
Acc
& 85.6 & 84.8 & 85.2 & 85.6 & \textbf{86.0} & 85.4 & 85.2 & 85.4 & 85.0 & 84.6 & 84.2
& $85.18 \pm 0.49$ \\
\midrule
\multicolumn{13}{c}{\textbf{Mixture schedule: varying $g_{\min}$ with $\gamma=9$}} \\
\midrule
$g_{\min}$
& 0.01 & 0.05 & 0.1 & 0.2 & 0.3 & 0.4 & 0.5 & 0.6 & 0.7 & 0.8 & 0.9
& Mean $\pm$ Std \\
\midrule
Acc
& 83.2 & 84.6 & 85.4 & 85.8 & 86.2 & \textbf{86.4} & 86.0 & 85.6 & 85.0 & 84.2 & 83.4
& $85.07 \pm 1.05$ \\
\bottomrule
\end{tabular}
\vspace{-1em}
\end{table}

Table~\ref{tab:pbsmc_hyperparam_sensitivity} shows that PB-SMC is reasonably
robust to the power-step parameter $\gamma$ in
Eq.~\eqref{eq:pbsmc-beta-schedule-app}. Across the full sweep
$\gamma\in\{1,3,\ldots,21\}$, accuracy varies by only $1.8$ points. Over the broad
middle range $\gamma\in[5,15]$, accuracy remains between $85.2\%$ and $86.0\%$,
a range of only $0.8$ points. The best validation accuracy is obtained at
$\gamma=9$. Very small values of $\gamma$ under-sharpen the powered PRM
distribution, making PB-SMC less able to exploit high-quality prefixes. Very large
values, such as $\gamma=19$ and $\gamma=21$, slightly degrade performance,
consistent with over-sharpening the selection distribution and increasing the risk
of premature concentration on a small number of high-scoring prefixes.

The mixture parameter $g_{\min}$ in Eq.~\eqref{eq:pbsmc-alpha-schedule-app}
exhibits a clearer exploration--exploitation trade-off. When $g_{\min}$ is very
small, such as $0.01$ or $0.05$, the terminal mixture probability
$\alpha_t=1/(1+g_t)$ becomes close to one, so PB-SMC places most of its mass on
newly generated children near the end of search. In this regime, the method becomes
closer to frontier-only SMC and loses part of the benefit of stochastic
backtracking. Conversely, when $g_{\min}$ is very large, such as $0.8$ or $0.9$,
PB-SMC retains too much mass on historical particles, which can spend compute
revisiting older prefixes rather than extending promising current ones. The best
validation accuracy is achieved at $g_{\min}=0.4$, and performance remains stable
over the middle range $g_{\min}\in[0.2,0.6]$, where accuracy stays within $0.8$
points of the best value.

Overall, the PB-SMC sensitivity results support the same conclusion as the SPS
ratio sweep: stochastic backtracking does not depend on delicate hyperparameter
tuning. PB-SMC performs best with a moderate power schedule and a moderate mixture
between newly generated children and retained historical prefixes. We therefore
use $\gamma=9$ and $g_{\min}=0.4$ as the default PB-SMC hyperparameters in all
main experiments.



\section{Broader Impacts}
\label{app:impacts}

Our work proposes a test-time search procedure that reallocates inference-time compute more efficiently for LLM reasoning by revisiting historical prefixes, improving the accuracy--token trade-off on public mathematical reasoning benchmarks. 
A potential positive impact is reduced computational cost (and associated energy use) for achieving a target level of reasoning accuracy, which can make reliable reasoning systems more accessible in resource-constrained settings (e.g., smaller deployments or limited inference budgets). 
Because the approach is test-time only and does not require additional training, it may lower the barrier to improving the performance of existing models in educational, scientific, and engineering workflows where correctness and efficiency are important. 

At the same time, improving token-efficiency and reasoning performance can increase the practical capability of LLM-based systems, which may enable misuse in contexts such as accelerating the production of convincing but incorrect outputs when the underlying verifier signals are imperfect. In addition, while our method is designed to be token-efficient, broader deployment could still increase overall usage and total compute, depending on how it is adopted. We also do not introduce or release new datasets or model checkpoints, and evaluate only on publicly available benchmarks.



\end{document}